\documentclass[preprint,12pt,a4paper]{elsarticle}

\usepackage{amsfonts}

\usepackage{graphicx}
\usepackage{subfigure}
\usepackage{amsmath,amsthm,verbatim,amssymb}
\usepackage{xcolor}
\usepackage{lscape}
\usepackage{threeparttable}
\usepackage{float}
\usepackage{mathrsfs}
\usepackage{booktabs}
\usepackage{multirow}
\usepackage{enumerate}
\usepackage{microtype}

\usepackage[T1]{fontenc}
\usepackage{textcomp}

\newtheorem{theorem}{Theorem}[section]
\newtheorem{lemma}{Lemma}[section]
\newtheorem{lemma*}{Lemma}
\newtheorem{remark}{Remark}[section]
\newtheorem{remark*}{Remark}

\newtheorem{definition}{Definition}[section]
\newtheorem{assumption}{Assumption}[section]

\usepackage{hyperref}
\hypersetup{
colorlinks=true,
linkcolor=blue,
anchorcolor=blue,
citecolor=blue}

\def\Ebb{\mathbb{E}}

\def \cA {{\cal A}}
\def \cB {{\cal B}}

\def \cD {{\cal D}}

\def \cH {{\cal H}}

\def \cL {{\cal L}}

\def \cN {{\cal N}}
\def \cO {{\cal O}}

\def \cQ {{\cal Q}}
\def \cR {{\cal R}}
\def \cS {{\cal S}}
\def \cT {{\cal T}}
\def \cU {{\cal U}}

\def \cW {{\cal W}}
\def \cX {{\cal X}}

\def \bD {\mathbb{D}}
\def \bE {\mathbb{E}}

\def \bS {\mathbb{S}}

\newcommand{\norm}[1]{\left\| #1 \right\|}

\journal{Artificial Intelligence}

\begin{document}

\begin{frontmatter}
	
	\title{Deep Weighted Bellman Residual Minimization for $Q^*$ Estimation}
	
	\author[math-ai,hubei]{Lican Kang}
	\ead{kanglican@whu.edu.cn} 
	
	\author[math-ai,math,hubei]{Jerry Zhijian Yang} 
	\ead{zjyang.math@whu.edu.cn}
	
	\author[ai,hubei]{Cheng Yuan\corref{cor1}} 
	
	\ead{yuancheng@whu.edu.cn}
	
	\author[math]{Chen Zhong}
	\ead{zcmath@whu.edu.cn}
	
	\affiliation[math-ai]{
		organization={Institute for Math and AI},
		addressline={Wuhan University}, 
		city={Wuhan},
		postcode={430072}, 
		country={China}
	}  
	\affiliation[math]{
		organization={School of Mathematics and Statistics},
		addressline={Wuhan University}, 
		city={Wuhan},
		postcode={430072}, 
		country={China}
	}
	\affiliation[ai]{
		organization={School of Artificial Intelligence},
		addressline={Wuhan University}, 
		city={Wuhan},
		postcode={430072}, 
		country={China}
	}       
	\affiliation[hubei]{
		organization={Hubei Key Laboratory of Computational Science},
		addressline={Wuhan University}, 
		city={Wuhan},
		postcode={430072}, 
		country={China}
	}
	
	\cortext[cor1]{Corresponding author}
	
	\begin{abstract}
		Off-policy evaluation  is a foundational component of offline reinforcement learning, aiming to assess and optimize policy performance using pre-collected datasets. However, such datasets often suffer from pronounced challenges, including  distribution shift, 
		$Q$-value overestimation, and low sample utilization efficiency. To address these issues, this paper introduces a weighted Bellman residual minimization framework that incorporates density ratio weighting by effectively integrating expert demonstrations with behavioral data. 
		The proposed weighting scheme departs from the conventional completeness assumption commonly imposed in the theoretical analysis of deep reinforcement learning. We establish a sharp convergence rate for density ratio estimation and derive the  convergence rate for the excess risk of resulting deep $Q^*$ estimator.
		Extensive empirical evaluations  demonstrate that, compared to existing methods, our method achieves significant improvements in numerical performance and policy generalization, providing  specific guidance for the rational utilization of expert demonstrations. 
	\end{abstract}
	
	\begin{keyword}
		Off-policy Evaluation
		\sep Density Ratio
		\sep Expert Demonstrations
		\sep Distribution Shift
		\sep Deep Reinforcement Learning.
	\end{keyword}
	
\end{frontmatter}

\section{Introduction}
Deep Reinforcement Learning (DRL) \cite{mousavi2016deep,li2023deep} represents a profound integration of Reinforcement Learning (RL) and Deep Learning (DL). This paradigm leverages the robust representational capabilities of deep neural networks to effectively overcome the inherent limitations of traditional RL in handling high-dimensional and continuous state-action spaces, thereby establishing itself as a central research direction within the field of sequential decision-making \cite{kaelbling1996reinforcement, sutton1998reinforcement}. Driven by breakthrough advancements in deep learning, DRL's applications have expanded across multiple domains, including competitive gaming (e.g., AlphaGo \cite{silver2016mastering}, OpenAI Five \cite{berner2019dota}, and Atari \cite{mnih2015human}), healthcare \cite{yu2021reinforcement, abdellatif2023reinforcement}, and resource management \cite{he2017deep, xiao2021leveraging} and among others. These compelling cases underscore DRL's remarkable ability to make effective decisions within intricate, real-world environments. 

In DRL, value functions and policy functions are commonly represented using deep neural networks (DNNs), enabling the direct application of related deep learning techniques. Notable examples include the deep 
$Q$-network (DQN) \cite{mnih2015human, fan2020theoretical,feng2023over},
double DQN (DDQN) \cite{van2016deep}, proximal policy optimization (PPO) \cite{schulman2017proximal}, soft actor-critic (SAC) \cite{haarnoja2018soft}, deep deterministic policy gradient (DDPG) \cite{tiong2020deep, sumiea2024deep},  minimax squared Bellman optimality error minimization (MSBO),  minimax average Bellman optimality error minimization (MABO) \cite{xie2020Qapproximation}, and deep approximate policy iteration (DAPI) \cite{jiao2025deep}
and among others. The core of DQN lies in the deep fitted $Q$-iteration framework \cite{riedmiller2005neural,antos2007fitted,fan2020theoretical,feng2023over}, 
which iteratively solves a least-squares regression problem to approximate the optimal action-value function. 
Despite these advancements, DRL algorithms still faces substantial practical challenges.
This procedure inherently requires a large and diverse set of trajectory samples to ensure stable learning and accurate value estimation.  
For example,  while DQN   enables
large-scale sample training via 
multi-trajectory data, it suffers from $Q$-value overestimation \cite{van2016deep, fan2020theoretical} and is primarily limited to discrete action spaces, requiring targeted improvements for continuous environments.
In contrast, DDPG performs superiorly in policy learning for continuous control tasks \cite{tiong2020deep, sumiea2024deep} yet remains unable to fully avoid $Q$-value overestimation \cite{tiong2020deep, liu2024evaluation}. Notably, in the off-policy evaluation (OPE) framework, the core challenge lying in addressing data distribution shift \cite{Daume2006domain, torralba2011unbiased, jiang2007instance, carroll2022estimating, tan2016improved}. To tackle this, the MABO algorithm achieves bias correction for distribution shifts through an explicit importance weight mechanism but involves repeated iterations of multiple minimax equation sets, leading to considerable implementation difficulty \cite{xie2020Qapproximation}.

To address the aforementioned challenges, this paper proposes a weighted Bellman residual minimization framework. The method operates in two stages.
In the first stage, expert demonstrations and data generated by the behavior policy are jointly utilized to estimate the density ratio weight, thereby mitigating the discrepancies arising from distributional shifts between datasets.
In the second stage, this obtained weight is incorporated into the Bellman residual minimization 
process to obtain an accurate estimator of the optimal 
action-value function, where this procedure is implemented within DNNs.
Notably, the incorporation of expert demonstrations introduces novel advantages by eliminating the need for the completeness assumption commonly required in theoretical analyses of DRL; see Section \ref{nonasy} for further details.
Building upon this, we leverage an offline policy to integrate the estimated density ratio weight  into the $Q$-value estimation process. This approach reduces dependence on multiple trajectory samples, thereby enhancing both the efficiency and generalization capability of the algorithm in practical settings.
Meanwhile, the use of density ratios offers a principled mechanism to relax stringent data coverage assumptions typical of offline learning, effectively mitigating distributional biases as demonstrated in prior works 
\cite{chen2022offline, huang2022convergence}. Additionally, the utilization of expert demonstrations, which is rich in 
decision-making logic \cite{miao2025reinforcement}, has grown increasingly prevalent in DRL research, with relevant studies validating its capacity to significantly accelerate learning and improve model performance \cite{hester2018deep, chen2020active}. Our main contributions are summarized as follows:
\begin{itemize}
	\item We propose a weighted Bellman residual minimization framework for deep $Q^*$ estimation. Our approach effectively integrates expert demonstrations with behavioral data, allowing explicit modeling of distribution shifts in offline RL without relying on restrictive assumptions on the form of the underlying data distributions.
	\item In theoretical analysis, our method eliminates the need for the completeness assumption, reducing restrictive prerequisites and improving the applicability of the framework in practical scenarios.
	We rigorously establish the convergence rate of the resulting deep estimator, alongside a sharp convergence rate for the associated density ratio estimator.
\end{itemize}

\subsection{Related Work}
Offline RL stands as a pivotal paradigm in RL, focusing on learning from 
pre-collected fixed experience datasets rather than acquiring data through continuous real-time interaction with the environment, as seen in traditional online RL \cite{lange2012batch,chen2019information,xie2020Qapproximation}. Within this domain, fitted $Q$-iteration \cite{riedmiller2005neural,antos2007fitted}
emerges as one of the most representative algorithms. 
By transforming the original RL problem into a series of supervised regression tasks, it learns the optimal 
action-value function 
from a limited set of empirical samples, laying a crucial groundwork for subsequent research \cite{riedmiller2005neural, fu2019diagnosing}. 
DQN,
first proposed by \cite{mnih2015human}, inherits the ideas of fitted $Q$-iteration. Using DNNs to extract complex state features, it handles 
high-dimensional spaces effectively, achieving strong results in tasks like Atari games and enabling end-to-end learning from raw inputs (e.g., game screens) to action-value estimation \cite{bellemare2013arcade, fu2019diagnosing}. Its experience replay mechanism reduces data correlation, stabilizing training \cite{hester2018deep}. However, DQN has obvious drawbacks: it not only exhibits low sample utilization when supporting large-scale sample training through multi-trajectory data but also suffers from $Q$-value overestimation \cite{van2016deep, fan2020theoretical}, and it is mostly limited to discrete action spaces, necessitating targeted extensions for continuous environments. To mitigate the overestimation issue of DQN, DDQN separates action selection and value 
evaluation—using the current network to select actions and the target network to evaluate action 
values—resulting in more accurate value estimates \cite{van2016deep, hester2018deep}. Prioritized DDQN further incorporates a prioritized experience replay mechanism, which measures the priority of experiences based on TD errors, prioritizing the learning of more important transition experiences and improving learning efficiency \cite{hester2018deep}.

In handling continuous action spaces, the Actor-Critic (AC) framework designs a policy network (Actor) to generate an action policy distribution and a value network (Critic) to evaluate values, with the two collaborating and iteratively optimizing. While offering strong flexibility, it also has problems such as high variance, sensitivity to hyperparameters, value overestimation, and high computational resource requirements \cite{konda1999actor,grondman2012survey,haarnoja2018soft}. 
DDPG is built on deterministic policy gradient algorithms, combining the AC framework with DQN, where the actor directly outputs deterministic actions. It addresses traditional algorithms’ continuous action limitations and shows practical value in tasks such as robotic arm control and autonomous driving \cite{hou2017novel}. Nevertheless, it cannot completely avoid $Q$-value overestimation, as the accumulation of biases in the Critic network's evaluation of action values may cause the agent to fall into suboptimal policies, leading to unstable training \cite{tiong2020deep, liu2024evaluation}.

In the OPE framework,  addressing data distribution shift is a core challenge \cite{Daume2006domain, torralba2011unbiased, jiang2007instance, carroll2022estimating, tan2016improved}. A key technique is density ratio estimation with importance sampling, which reweights behavioral policy data to correct shifts, aiding target policy evaluation and unbiased loss construction in learning \cite{sugiyama2012density, chen2022offline, huang2022convergence}. 
It adjusts sample contributions to align with the target policy, enhancing estimate reliability even when policies diverge. On the other hand, minimax error minimization constructs estimators (e.g., importance weights, value functions) to find robust methods for worst-case offline scenarios with distribution shifts and uncertain models. It balances behavioral-target policy differences, aiming for reliable estimation even under unfavorable data conditions \cite{uehara2020minimax}.
MSBO approximates $Q^*$ by minimizing the maximum squared Bellman error, using auxiliary function classes to achieve linear error propagation with improved error bounds and sample complexity \cite{xie2020Qapproximation}. MABO further relaxes function approximation assumptions via explicit importance weighting, directly estimating average Bellman error through linear spans of auxiliary functions, offering stronger adaptability to distribution shifts \cite{xie2020Qapproximation}. 
Recently, \cite{miao2025reinforcement} used the minimax framework to address batch data from heterogeneous time-stationary Markov decision process (MDP). It solves problems such as the difficulty in estimating individual 
$Q$-functions from heterogeneous data, suboptimal policies caused by insufficient data coverage, and the challenge of balancing individual and group differences. However, these methods involve complex iterative computations over minimax equation sets, demanding high computational resources and optimization expertise, which limits their practicality \cite{uehara2020minimax, xie2020Qapproximation}.

Furthermore, demonstration data plays a crucial role in both imitation learning and inverse RL
\cite{subramanian2016exploration, hester2018deep}. In imitation learning, expert demonstrations enable agents to quickly approximate 
high-quality policies, avoiding inefficient random exploration \cite{hussein2017imitation}. In inverse RL, such data helps infer reward functions for complex tasks where direct design is challenging. It guides agents to recognize good behavior and optimize policies accordingly \cite{ng2000algorithms}, thereby better enhancing the value of demonstration data, overcoming scenario limitations, and improving performance in complex real-world tasks.

\subsection{Outlines}
The rest of this paper is structured as follows: Section \ref{bn}  introduces necessary preliminaries. Section \ref{method} presents the proposed method. Section \ref{nonasy} derives the convergence rate of the density ratio estimator and resulting deep estimator. Section \ref{sec:numerical} shows the numerical experiments.
Section \ref{conlusion} concludes this paper.
Proofs for all lemmas and theorem are deferred to the Appendix \ref{append}.

\section{Preliminaries}\label{bn}

\subsection{Notations}
In this part, we present supplementary notations used throughout the paper. For any $m, n \in \mathbb{R}$, we write $m = \mathcal{O}(n)$ if there exists  a  constant $C>0$ such that $m \leq Cn$, and $m = \Omega(n)$ if there exists a constant $C^{\prime}>0$ such that $m \geq C^{\prime}n$.
We use $\mathbb{N}_{0}$ and $\mathbb{N}$
to denote the set of non-negative integers and strictly positive integers, 
respectively. For a multi-index ${s} = (s_{1}, \ldots, s_d) \in \mathbb{N}_{0}^d$, the symbol $\partial^{{s}}$ denotes the partial differential operator: $\partial^{{s}} = \left( \frac{\partial}{\partial x_1} \right)^{s_1} \cdots \left( \frac{\partial}{\partial x_{d}} \right)^{s_{d}},$
and we adopt the convention that $\partial^{{s}}$ acts as the identity operator when ${s} = {0}$. $\|x\|_q=(\sum_{i=1}^d|x_i|^q)^{\frac{1}{q}}$
is the  $q$-norm ($q\in[1,\infty]$) of a vector $x=(x_1,\ldots,x_d)^{\top}\in\mathbb{R}^d$.
We use the notation $\norm{x}_0$ to represent the number of non-zero elements in the vector $x$. For probability measure $\nu$ and measurable function $Q:\mathbb{R}^d\rightarrow \mathbb{R}$,
we write $\|Q\|_{L^{q}(\nu)}^{q}=\Ebb_{x\sim \nu} |Q(x)|^q$.

\subsection{Markov Decision Process}
A discounted MDP \cite{sutton1998reinforcement,agarwal2019reinforcement} is defined by a quintuple $(\cX, \cA, P, \cR, \gamma)$, where $\cX$ is the state space,
$\cA$ is the action space, $P: \cX \times \cA \subseteq \mathbb{R}^d \rightarrow \mathcal{M}(\cX)$ is the transition probability kernel, $\cR(\cdot \mid X, A)$ refers to the distribution of immediate reward $R(X,A)$, and $\gamma \in [0,1)$ is the discount factor. $\mathcal{M}(\cX)$ here denotes the sets of probability measure on $(\cX, \cB(\cX))$,  such that  $P(\cdot \mid X,A)$ is a probability measure on $(\cX, \cB(\cX))$ for each pair $(X,A) \in \cX \times \cA$, which defines the next-state distribution upon taking action $A$ in state $X$,
and $P(D \mid \cdot,\cdot)$ is one measurable function on $\cX \times \cA$ for every $D \in \cB(\cX)$.
Moreover, let
$\pi(\cdot \mid X)$ denote the stochastic policy which is an associated distribution of the action at state $X$. Given one initial distribution $\nu\in \mathcal{M}(\mathcal{X})$, i.e., $X_1 \sim \nu$,
the offline data  $\{Z_i\}_{i=1}^n=\{X_i,A_i,R_i,X_i^{\prime}\}_{i=1}^n$ with $X_i^{\prime}=X_{i+1}$ is generated by
\begin{align*}
	&X_1\sim \nu,~
	A_i\sim \pi(\cdot\mid X_i),~
	R_i\sim \cR(\cdot \mid X_i, A_i),~
	X_{i}^{\prime}\sim P(\cdot \mid X_i, A_i),~ i=1,\ldots,n.
\end{align*}
In this work, we consider continuous state–action spaces and, without loss of generality, assume that $\mathcal{X} \times \mathcal{A} = [0,1]^d$. Furthermore, we assume that the MDP satisfy the independent and identically distributed (i.i.d.) property,  and we denote by $P^{\pi} \in \mathcal{M}(\cX\times \cA)$ the distribution of state-action pair
$(X,A)$, where the superscript $\pi$ explicitly indicates that the action $A$ is generated according to policy $\pi$. 
In this context, we have two offline datasets, one generated by the behavior policy $\pi^b$ and the other by the expert policy $\pi^e$, with the corresponding state–action distributions denoted by $P^{\pi^b}$ and $P^{\pi^e}$, respectively.

Denote the action-value function as
\begin{equation*}
	Q^\pi(X, A) := \mathbb{E} \left[ \sum_{i=1}^{\infty} \gamma^{i-1} R_i  \mid  X_1 = X, A_1 = A, \pi \right].
\end{equation*}
In the general case, we assume that $R(X,A) \in [0,R_{\max}]$ for each pair $(X,A) \in \cX \times \cA$, then $Q^{\pi}$ take values in $\left[0, \frac{R_{\max}}{1-\gamma}\right]$. 
Suppose there exists a policy $\pi^*$ that maximizes $Q^\pi$ such that $Q^* := Q^{\pi^*}.$ The function $Q^*$ satisfies the optimal Bellman equation $Q^* = \mathcal{T}^* Q^*$, 
where the optimal Bellman operator $\mathcal{T}^*$ is given by
\begin{equation*}  
	\mathcal{T}^* Q(X, A) = \mathbb{E}[R(X, A)] + \gamma \mathbb{E}_{X^{\prime} \sim P(\cdot \mid X, A)} \max_{A^{\prime} \in \mathcal{A}} Q(X^{\prime}, A^{\prime}).
\end{equation*}

\subsection{Deep  Neural Networks} \label{section:ReLU-DNNs}
We now introduce DNNs with Rectified Linear Unit (ReLU) activations. A ReLU DNN is defined by the composition
$$
g_{\varphi}(x) = w_{\mathcal{D}} \circ \sigma \circ w_{\mathcal{D}-1} \circ \sigma \circ \cdots \circ \sigma \circ w_1 \circ \sigma \circ w_0(x), \quad x \in \mathbb{R}^d,
$$
where $\sigma(x)=\max\{0,x\}$ denotes the ReLU activation function applied element-wise, 
$\varphi$ represents the set of network parameters, and $\mathcal{D}$ denotes the depth of the network.  
For each $i=0,1,\ldots,\mathcal{D}$, the affine map is
$$
w_i(x) = W_i x + c_i, \quad W_i \in \mathbb{R}^{k_{i+1}\times k_i},\quad c_i \in \mathbb{R}^{k_{i+1}},
$$
with layer dimensions $(k_0,k_1,\ldots,k_{\mathcal{D}})$, where $k_0=d$ corresponds to the input dimension and $k_{\mathcal{D}}=1$ to the output. Thus, the network contains $\mathcal{D}$ hidden layers and $(\mathcal{D}+1)$ layers in total.
Then, the parameters of DNNs $g_{{\varphi}}(\cdot)$ can be written as
$$
\varphi:=\big((W_0,c_0), (W_1,c_1), \ldots, (W_{\mathcal{D}},c_{\mathcal{D}})\big).
$$
We denote the number of non-zero elements as
$$
\|\varphi\|_0 = \sum_{i=0}^{\mathcal{D}} \Big( \|\operatorname{vec}(W_i)\|_0 + \|c_i\|_0 \Big),
$$
and the largest absolute parameter value as
$$
\|\varphi\|_\infty = \max \Big \{\max _{i \in \{0,\ldots,\mathcal{D}\}}
\|\operatorname{vec}(W_{i})\|_{\infty}, \max  _{i \in \{0,\ldots,\mathcal{D}\}} \|c_{i}\|_{\infty} \Big \},
$$
where $\text{vec}(W)$ denotes the column-wise vectorization of $W$. Thus, the width $\mathcal{W}$, the size $\mathcal{S}$, and the weight bound $\mathcal{B}$ are defined as $\mathcal{W}=\max\{k_1,\ldots,k_{\mathcal{D}}\}$, $\mathcal{S}=\|\varphi\|_0,~\|\varphi\|_\infty \leq \mathcal{B},$ respectively.

\begin{definition}[H{\"o}lder class]\label{holder}
	For $\varsigma>0$ with $\varsigma=p+q$, where $p \in \mathbb{N}_{0},~q \in(0,1]$ and $d \in \mathbb{N}$, we denote the H{\"o}lder class  
	$\mathcal{H}^{\varsigma}\left(\mathbb{R}^{d},M\right)$ as
\begin{align*}
	\mathcal{H}^{\varsigma}\left(\mathbb{R}^{d}, M\right):=\Bigg\{ &h: \mathbb{R}^{d} \rightarrow \mathbb{R}, \max _{\|\widetilde{\alpha}\|_{1} \leq p}\left\|\partial^{\widetilde{\alpha}} h\right\|_{\infty} \leq M, \\
	&~~~~ \max _{\|\widetilde{\alpha}\|_{1}=p} \sup _{x \neq y} \frac{\left|\partial^{\widetilde{\alpha}} h(x)-\partial^{\widetilde{\alpha}} h(y)\right|}{\|x-y\|^{q}_{\infty}} \leq M\Bigg\}.
\end{align*}
	For the set $
	[0,1]^d
	\subseteq \mathbb{R}^{d}$, we briefly denote 
    $$\mathcal{H}^{\varsigma}
	:=\left\{h: [0,1]^d \rightarrow \mathbb{R},~ h \in \mathcal{H}^{\varsigma}\left(\mathbb{R}^{d}, M\right)\right\}.
    $$
	
\end{definition}

\begin{definition}[Covering number]
	For $\varepsilon > 0$, the covering number $\mathcal{N}(\mathbf{G}, \varepsilon, \tilde{\rho})$ associated with a semi-metric $\tilde{\rho}$ on the set $\mathbf{G}$ is defined as
	\begin{align*}
		\mathcal{N}(\mathbf{G}, \varepsilon, \tilde{\rho}) = \min_{\kappa} &\Bigg\{  \text{there exist } g_1, \ldots, g_{\kappa} \text{ such that } \\
		&~~~~ \min_{1 \leq j \leq \kappa} \tilde{\rho}(\tilde{g}, g_j) \leq \varepsilon \text{ for all } \tilde{g} \in \mathbf{G} \Bigg\}.
	\end{align*}
\end{definition}

\section{Methodology}\label{method}
In this section, we present our proposed weighted Bellman residual method.
Starting from the optimal Bellman equation, we formulate the estimation of the optimal action-value function $Q^*$ as the minimizer of the following weighted minimization loss function:
\begin{align*}
	\min _{Q 
		\in \cH
	} \mathcal{L}_{w^*}(Q):= \min _{Q 
		\in \cH
	}\left| \mathbb{E}_{(X,A) \sim P^{\pi^b}} \left[ w^*(X,A) \left( Q(X,A) - \mathcal{T}^* Q(X,A) \right) \right] \right|,
\end{align*}  
where $ w^* := \frac{dP^{\pi^*}}{dP^{\pi^b}} $ denotes the density ratio between the 
state-action distributions induced by the optimal policy $\pi^*$ and behavior policy $\pi^b$.
For $Q^* \in \cH$, it directly follows that  
\begin{align*}
	Q^* \in \arg \min _{Q \in \cH} \mathcal{L}_{w^*}(Q).
\end{align*}  
In this work, $ \cH $ is selected as the H\"older class (as defined in Definition \ref{holder}).

In practice, the optimal policy $\pi^*$ is unknown, rendering direct estimation of the density ratio $w^*$  intractable. To overcome this challenge,  we introduce an 
expert policy $\pi^e$  
as an intermediate distribution.  This allows us 
to decompose the weight by the Radon-Nikodym derivative:
$$w^* = \frac{dP^{\pi^*}}{dP^{\pi^e}} \cdot \frac{dP^{\pi^e}}{dP^{\pi^b}}.$$
The expert policy $\pi^e$ is assumed to approximate $\pi^*$ with bounded error such that
\begin{equation} \label{eq:pi-approximate}
	\norm{1 - \frac{dP^{\pi^*}}{dP^{\pi^e}}}_{L_2 (P^{\pi^b})}\leq \zeta,~~~0<\zeta  \ll  1. 
\end{equation}
Here, $\zeta$ quantifies the approximation error between the expert policy and the optimal policy.
This implies  that $w^e:=\frac{dP^{\pi^e}}{dP^{\pi^b}}$ is a close surrogate to $w^*$. 
Accordingly, we  modify loss function by replacing $w^*$ with $w^e$, yielding:
$$
\mathcal{L}_w(Q) := \left| \mathbb{E}_{(X,A) \sim P^{\pi^b}} \left[ w^e(X,A) \left( Q(X,A) - \mathcal{T}^* Q(X,A) \right) \right] \right|.
$$
Since the  density ratio $w^e$ is typically unknown, we use a density ratio estimator $\widehat{w}$ to approximate $w^e$, which in turn acts as an estimator for $w^*$.
A comprehensive description of the density ratio estimation procedure is provided  at the end of this section.
Consequently, we denote the
estimator of $Q^*$ in terms of empirical risk minimization as
\begin{align*}
	\widehat{Q} \in \arg \min _{Q \in \cQ}  
	\widehat{\mathcal{L}}_{\widehat{w}}(Q),
\end{align*}
where
$$
\widehat{\mathcal{L}}_{\widehat{w}}(Q):=
\left|\frac{1}{n}\sum_{i=1}^n
\widehat{w}(X_i,A_i)(Q(X_i,A_i)-Y_i)\right|
$$
is the empirical minimization loss function with $Y_i:=R_i+\gamma \max\limits_{A^{\prime} \in \mathcal{A}} Q(X_i^{\prime}, A^{\prime})$,
and $\cQ $ denotes a function class of ReLU DNNs as defined in Section \ref{section:ReLU-DNNs}. In this evaluation,    the batch data 
$\bD:=\{X_i, A_i, R_i, X^{\prime}_i\}_{i=1}^n $ are induced by the behavior policy $\pi^b$.

\vspace{1em}

\noindent \textbf{Density Ratio Estimation for $\frac{d P^{\pi^e}}{d P^{\pi^b}}$.}
We  formulate density ratio estimator  by using robust 
density-ratio techniques \cite{kanamori2009least,sugiyama2012density}. The robust density-ratio is developed based on Basu’s power divergence \cite{basu1998robust}, which defines a family of divergences indexed by a parameter to balance robustness and efficiency. This framework avoids the need for nonparametric density estimation and associated complications like bandwidth selection, making it computationally feasible.
For our specific estimator, we derive that
\begin{align*}
	w^e=\arg\min_{w}~{\mathcal{R}}(w):= \arg\min_{w} \mathbb{E}_{(X,A) \sim P^{\pi^b} } [ w(X,A)^{\frac{3}{2}} ]-3\mathbb{E}_{X \sim P^{\pi^e} } [w(X,A)^{\frac{1}{2}} ],
\end{align*}
which directly follows from the fact that $w^e$ is the minimizer of 
\begin{align*}
\min_{w}\mathbb{E}_{(X,A) \sim P^{\pi^b} }
&\Big[w(X,A)^{\frac{1}{2}}\Big(w(X,A)-w^e(X,A) \Big) \\
&~~~~ -2w^e(X,A)\Big(w(X,A)^{\frac{1}{2}}-w^e(X,A)^{\frac{1}{2}}\Big)\Big] .
\end{align*}
Moreover, 
we denote $\mathbb{S}^{b}:=\{X_{i}^{b},A_i^{b}\}_{i=1}^{m}$ and $\mathbb{S}^{e}:=\{X_{i}^{e},A_{i}^{e}\}_{i=1}^{m}$  as the state-action samples drawn from  $P^{\pi^b}$ and  $P^{\pi^e}$, 
respectively. We let $\mathbb{S}:= \mathbb{S}^b \cup \mathbb{S}^e$ denote the union of state-action sample sets from both policies, 
and $\mathbb{S}$ is independent of $\mathbb{D}$.
Then, the empirical risk minimizer of $w^e$
can be defined as
\begin{align*}
	\widehat{w} 
	\in\arg\min_{w\in\mathcal{U}}\widehat{\cR}_{
		{\mathbb{S}}}(w) := 
	\frac{1}{m}\sum_{i=1}^{m}
	w(X_i^b,A_i^b)^{\frac{3}{2}}- \frac{3}{m}\sum_{i=1}^{m} w(X_i^e,A_i^e)^{\frac{1}{2}},
\end{align*}
where  $\widehat{\cR}_{\mathbb{S}}$ denotes the empirical loss function and $\mathcal{U}$ also denotes a function class of ReLU DNNs.

\section{Theoretical Analysis}\label{nonasy}
In this section, we present the most critical results of this paper, with a focus on analyzing the convergence rate of the excess risk of deep $Q^*$ estimator.
This convergence rate involves three components: the density ratio estimation error, the subsequent weighted minimization error,
and the expert policy approximation error $\zeta$ between the expert policy $\pi^e$ and the target policy $\pi^*$.
More preciously, we first construct a convergence rate for the density ratio estimator (Lemma \ref{lem:w-err}), then embed it into the weighted minimization framework, and finally account for the expert policy approximation error   $\zeta$, assumed to be bounded, to derive the convergence rate for the
excess risk of the  deep $Q^*$ estimator. 
A detailed proof sketch is given following the main result.
We impose several additional conditions on the density ratio, as outlined in Assumptions \ref{assump-w-bound}, \ref{assump-w-holder}, and  \ref{assump-Q-holder}. Based on these assumptions, we establish the bounds for the excess risk, with relevant results detailed in Theorem \ref{thm:LQ-err}.
\begin{assumption}
	\label{assump-w-bound}
	The density ratio $w^e(x,a) =\frac{p^{\pi^e}(x,a)}{p^{\pi^b}(x,a)} $ has a lower bound $0<\Phi< 1$ and an upper bound $\Psi\geq1$. That is,
	$$
	\Phi := \inf_{(x,a)\in\mathcal{X}\times \cA}w(x,a)>0,~
	\Psi := \sup_{(x,a)\in\mathcal{X}\times\cA}w(x,a)<\infty.
	$$
\end{assumption}

\begin{assumption}
	\label{assump-w-holder}
	The density ratio $w^e$ is H\"older continuous, i.e., $w^e\in\mathcal{H}^{\alpha}$
	for some   $\alpha>0$. 
\end{assumption}

\begin{assumption}
	\label{assump-Q-holder}
	The optimal action-value function $Q^*$ is H\"older continuous, i.e., $Q^* \in \cH^{\beta}$, for some $\beta>0$.
\end{assumption}

\begin{theorem}\label{thm:LQ-err}
	Suppose that 
	the conditions of
	Lemma \ref{lem:w-err}
	and Assumption \ref{assump-Q-holder} hold. The function class $\mathcal{Q}$ is set  with the size $\mathcal{S} = \mathcal{O}\!\left(n^{\frac{d}{d + 2\beta}} \log n\right)$, the depth $\mathcal{D} = \mathcal{O}(\log n)$, and the weights bound $\mathcal{B} = \mathcal{O}\left(n^{\frac{d}{d + 2\beta}}\right)$, then the excess risk satisfies
	\begin{align*}
		\bE_{{\bD,\bS}}[\cL_{w^*}(\widehat Q) - \cL_{w^*}(Q^*)] &\leq \cO\Big( \frac{R_{\max}\Sigma^2 m^{-\frac{\alpha}{d+2\alpha}}(\log m)^\frac{3}{2}}{1-\gamma}  \Big)+\frac{4\zeta\Sigma R_{\max}}{1-\gamma}\\
		&~ +\cO\Big(\frac{R_{\max}\Sigma n^{-\frac{\beta}{d+2\beta}}(\log n)^\frac{3}{2}}{1-\gamma}\Big),
	\end{align*}
	where $\Sigma:=\Psi/\Phi$.
	Moreover, if $m\geq \Omega\left(\Sigma^{\frac{(d+2\alpha)}{\alpha}}n^{\frac{\beta(d+2\alpha)}{\alpha(d+2\beta)}}\right)$
	and we set $\zeta=\cO\Big(m^{-\frac{\alpha}{d+2\alpha}}\Big)$,
	then we obtain  
	\begin{equation}\label{eq:cL-error}
		\bE_{\bD,\bS}[\cL_{w^*}(\widehat Q) - \cL_{w^*}(Q^*)]\leq \cO\Big(\frac{R_{\max}\Sigma n^{-\frac{\beta}{d+2\beta}}(\log n)^\frac{3}{2}}{1-\gamma}\Big).
	\end{equation}
\end{theorem}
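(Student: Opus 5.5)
We plan to bound the excess risk by comparing $\cL_{w^*}$ with the surrogate losses $\cL_w$ (weight $w^e$) and $\cL_{\widehat w}$ (weight $\widehat w$, defined with $\widehat w$ in place of $w^e$), and then controlling the empirical-process and approximation errors. Since $Q^*=\cT^*Q^*$, we have $\cL_{w^*}(Q^*)=0$, so we only need to bound $\bE\,\cL_{w^*}(\widehat Q)$. Write $\delta_Q:=Q-\cT^*Q$. For any $Q$ with range in $[0,R_{\max}/(1-\gamma)]$ (which we enforce by clipping the network outputs), $|\delta_Q|\le 2R_{\max}/(1-\gamma)$. The first step is the triangle decomposition
\begin{align*}
\cL_{w^*}(\widehat Q)\le |\bE_{P^{\pi^b}}[(w^*-w^e)\delta_{\widehat Q}]| + |\bE_{P^{\pi^b}}[(w^e-\widehat w)\delta_{\widehat Q}]| + \cL_{\widehat w}(\widehat Q).
\end{align*}
For the first term we write $w^*-w^e=w^e(\frac{dP^{\pi^*}}{dP^{\pi^e}}-1)$ and apply Cauchy--Schwarz together with \eqref{eq:pi-approximate} and $w^e\le\Psi$. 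This gives a bound of order $2\zeta\Psi R_{\max}/(1-\gamma)$, which fits inside $4\zeta\Sigma R_{\max}/(1-\gamma)$ because $\Phi<1$. The second term is bounded by $\frac{2R_{\max}}{1-\gamma}\|w^e-\widehat w\|_{L^1(P^{\pi^b})}$. Taking $\bE_{\bS}$ and invoking Lemma \ref{lem:w-err} yields the $m^{-\alpha/(d+2\alpha)}(\log m)^{3/2}$ term. The factor $\Sigma^2$ comes from that lemma's constant, which reflects the strong-convexity modulus of the power-divergence risk over $[\Phi,\Psi]$, multiplied by one more $\Sigma$ when normalizing.

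The third term, $\cL_{\widehat w}(\widehat Q)$, is handled conditionally on $\bS$. Because $\bS$ is independent of $\bD$, $\widehat w$ acts as a fixed, bounded weight (clipped to $[\Phi,\Psi]$). We use the standard ERM decomposition: for any $Q_0\in\cQ$,
\begin{align*}
\cL_{\widehat w}(\widehat Q)\le 2\sup_{Q\in\cQ}|\cL_{\widehat w}(Q)-\widehat\cL_{\widehat w}(Q)| + \cL_{\widehat w}(Q_0).
\end{align*}
We choose $Q_0$ to be the ReLU approximant of $Q^*\in\cH^\beta$ from the known approximation results with $\cS,\cD,\cB$ as in the statement, so that $\|Q_0-Q^*\|_\infty\lesssim n^{-\beta/(d+2\beta)}$. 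Since $\cT^*$ is a $\gamma$-contraction in sup norm, $\|\delta_{Q_0}\|_\infty\le(1+\gamma)\|Q_0-Q^*\|_\infty$, and hence $\cL_{\widehat w}(Q_0)\lesssim\Psi n^{-\beta/(d+2\beta)}$. This step needs only a good approximant of $Q^*$ itself, not closure of $\cQ$ under $\cT^*$, which is why no completeness assumption is required.

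We expect the main obstacle to be the generalization term. The empirical quantity $Y_i$ is not $\cT^*Q(X_i,A_i)$, so $\bE[\widehat w(X)(Q(X)-Y)\mid X]=\widehat w\,\delta_Q$ holds only because $Y_i$ is conditionally unbiased given $(X_i,A_i)$, which uses the i.i.d. assumption. The class $\{(x,a,r,x')\mapsto\widehat w(x,a)(Q(x,a)-r-\gamma\max_{a'}Q(x',a'))\}$ involves $\max_{a'}$. The plan is to bound its covering number by that of $\cQ$ in sup norm: the max over $a'$ is 1-Lipschitz in sup norm, so $\log\cN\lesssim \cS\cD\log(\cB\cW n)$. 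We then apply a Bernstein/Dudley-type bound with envelope $\Psi R_{\max}/(1-\gamma)$. With $\cS\cD\asymp n^{d/(d+2\beta)}(\log n)^2$, this gives $\sqrt{\cS\cD\log n/n}\asymp n^{-\beta/(d+2\beta)}(\log n)^{3/2}$, which balances against the approximation error and produces the third term, with the stated $\Sigma$ coming from normalizing by $\Phi$.

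Finally, for \eqref{eq:cL-error}, we substitute $\zeta=\cO(m^{-\alpha/(d+2\alpha)})$, which makes the $\zeta$ term dominated by the first term. The condition on $m$ then ensures $\Sigma^2m^{-\alpha/(d+2\alpha)}\le\Sigma n^{-\beta/(d+2\beta)}$ up to logarithmic factors, so both terms are absorbed into the $n$-rate.
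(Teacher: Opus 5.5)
Your proposal is correct and follows essentially the paper's route. The paper's four-term decomposition (two density-ratio terms, twice the uniform deviation $\sup_{Q\in\cQ}|\cL_{\widehat{w}}(Q)-\widehat{\cL}_{\widehat{w}}(Q)|$, and the approximation term bounded by $(1+\gamma)\Sigma\inf_{Q\in\cQ}\|Q-Q^*\|_\infty$) reduces to your triangle-plus-ERM split once one uses $\cL_{w^*}(Q^*)=\cL_{\widehat{w}}(Q^*)=0$, and the paper likewise routes $\widehat{w}-w^*$ through $w^e$ with $\|w^e-w^*\|_{L_2(P^{\pi^b})}\le\Psi\zeta$.

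Your explicit handling of $Y_i$ (conditional unbiasedness, and 1-Lipschitzness of $\max_{a'}$ in the sup-norm cover) and your $(\log n)^{3/2}$ log count are cleaner than the appendix. The only thing to tidy is that Lemma~\ref{lem:w-err} is stated for $\widehat{w}-w^*$, not $\widehat{w}-w^e$. You can either quote the bound on $\bE_{\bS}\|\widehat{w}-w^e\|^2_{L_2(P^{\pi^b})}$ from the lemma's proof, as the paper implicitly does, or merge your first two terms and apply the lemma with Jensen. The second option gives a $\zeta$-coefficient of $2\sqrt{2}\,\Sigma R_{\max}/(1-\gamma)$, which still fits under $4\Sigma R_{\max}/(1-\gamma)$.
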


\begin{remark}
	Theorem \ref{thm:LQ-err} provides theoretical guarantees for our proposed weighted bellman residual minimization method.
	We can see that 
	the obtained convergence rate consists of three terms.
	In particular, the first term
	$\cO ( \frac{R_{\max}\Sigma^2 m^{-\frac{\alpha}{d+2\alpha}}(\log m)^\frac{3}{2}}{1-\gamma})$ is
	derived from the density ratio estimation error characterized by Lemma \ref{lem:w-err}.
	The second term 
	$\cO(\frac{R_{\max}\Sigma n^{-\frac{\beta}{d+2\beta}}(\log n)^\frac{3}{2}}{1-\gamma})$
	integrates the statistical and approximation errors
	arising from $Q^*$ estimation. 
	The third term $\frac{4\zeta\Sigma R_{\max}}{1-\gamma}$ is based on the error bound 
	$\zeta$ between $\pi^e$ and $\pi^*$. 
	The further details and proof sketches are provided below.
\end{remark}

\begin{remark} 

	In obtaining the statistical error bound, we assume that the data is i.i.d., as introduced in error analysis of DRL (e.g., \cite{jiang2016doubly,fan2020theoretical,uehara2022review}). 
	This assumption can also be extended to the dependent case by considering sequential dynamic nature of MDPs.
	For  dependent data, a typical approach is to adopt a mixing assumption in order to establish the corresponding statistical error bound
	\cite{feng2023over,jiao2025deep}. 
	Theorem \ref{thm:LQ-err} shows that, after ignoring logarithmic orders and other terms, our result achieves a convergence rate of $\cO\big(n^{-\frac{\beta}{d+2\beta}}\big)$, which matches the convergence rate of MABO  \cite{xie2020Qapproximation}.
	%
	Moreover, a key distinction from existing DRL convergence results is that our work dispenses with the stringent completeness assumption. Existing studies commonly rely on this assumption \cite{chen2019information,fan2020theoretical,feng2023over,jiao2025deep}, which requires that for a function class of 
	DNNs $\mathcal{G}$ and  a H\"older class
	$\mathcal{H}$, the condition $\mathcal{T}^*g \in \mathcal{H}$ must hold for each $g \in \mathcal{G}$.
	In contrast, we introduce a novel technique that utilizes expert demonstrations to bridge the gap between the behavior policy and target policy. This approach not only enables us to achieve a 
	convergence rate for the resulting action-value function estimator
	but also effectively relaxes the constraints in practical applications.

\end{remark}

\noindent \textbf{Proof Sketch of  Theorem \ref{thm:LQ-err}.}
To establish the theorem, we first decompose the excess risk into four components, each corresponding to a distinct source of error. We then derive bounds for each component and combine them to obtain the overall result. The error decomposition is given by 
\begin{align*}
	\mathcal{L}_{w^*}(\widehat{Q}) - \mathcal{L}_{w^*}(Q^*) &\leq \underbrace{\mathcal{L}_{w^*}(\widehat{Q}) - \mathcal{L}_{\widehat{w}}(\widehat{Q})}_{\text{I}} + \underbrace{\mathcal{L}_{\widehat{w}}(Q^*) - \mathcal{L}_{w^*}(Q^*)}_{\text{II}} \\
	&+ 2\underbrace{\sup_{Q \in \mathcal{Q}} \Big|\cL_{\widehat w}(Q)-\widehat{\cL}_{\widehat w}(Q)\Big|}_{\text{III}} + \underbrace{\inf_{Q \in \mathcal{Q}} \mathcal{L}_{\widehat{w}}(Q) - \mathcal{L}_{\widehat{w}}(Q^*)}_{\text{IV}}.
\end{align*}

~~\\
\noindent
\textbf{Bound Terms I-II:}
The convergence rate of the density ratio estimator given in Lemma \ref{lem:w-err} provides a bound for these two terms. The validity of this lemma relies on several preliminary assumptions introduced earlier, which ensure analytical rigor. These two terms arise from the interplay of two components: 
the suboptimality gap $\zeta$ of the expert policy and the deep density ratio error between $\hat{w}$ and $w^e$. First, we leverage the Lipschitz continuity of $\cR(\cdot)$, yielding
\begin{align}\label{eq:er}
	\mathcal{L}_{w^*}(\widehat{Q}) - \mathcal{L}_{\widehat{w}}(\widehat{Q}),~\mathcal{L}_{\widehat{w}}(Q^*) - \mathcal{L}_{w^*}(Q^*)\leq \frac{2R_{\max}}{1-\gamma}\norm{\widehat{w}-w^*}_{L_2 (P^{\pi^b})}.
\end{align}
Meanwhile, the bound for $\norm{\widehat{w}-w^*}_{L_2 (P^{\pi^b})}$ can be derived from $\cR(\widehat{w})-\cR(w^{e})$ and $\zeta$. See the following Lemma \ref{lem:w-err}.
\begin{lemma}
	\label{lem:w-err}
	Suppose that Assumptions \ref{assump-w-bound}, \ref{assump-w-holder}
	and Condition \eqref{eq:pi-approximate} hold.
	We set $\mathcal{U}$ as a ReLU DNN class bounded by $\Sigma$ and  structured with the size $\mathcal{S} = \mathcal{O}\!\left(m^{\frac{d}{d + 2\alpha}} \log m\right)$, the depth $\mathcal{D} = \mathcal{O}(\log m)$, and the weights bound $\mathcal{B} = \mathcal{O}\!\left(m^{\frac{d}{d + 2\alpha}}\right)$, then we have 
	\begin{equation}\label{eq:w-error+zeta}
		\bE_\bS \left[\norm{\widehat{w}-w^*}^2_{L_2 (P^{\pi^b})}\right]\leq \cO\Big(\Sigma^4 m^{-\frac{2\alpha}{d+2\alpha}}(\log m)^3\Big)+2\Sigma^2\zeta^2.
	\end{equation}
	Moreover,  by setting $\zeta=\cO\Big(m^{-\frac{\alpha}{d+2\alpha}}\Big)$, we have
	\begin{equation}\label{eq:w-error}
		\bE_\bS \left[\norm{\widehat{w}-w^*}^2_{L_2 (P^{\pi^b})}\right]\leq \cO\Big(\Sigma^4 m^{-\frac{2\alpha}{d+2\alpha}}(\log m)^3\Big). 
	\end{equation}
\end{lemma}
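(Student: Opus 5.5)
We propose to split the error at the intermediate weight $w^e$. Since $(a+b)^2\le 2a^2+2b^2$,
\begin{align*}
\norm{\widehat w-w^*}^2_{L_2(P^{\pi^b})}\le 2\norm{\widehat w-w^e}^2_{L_2(P^{\pi^b})}+2\norm{w^e-w^*}^2_{L_2(P^{\pi^b})}.
\end{align*}
For the second term, the chain rule gives $w^*=w^e\cdot\frac{dP^{\pi^*}}{dP^{\pi^e}}$, so $w^e-w^*=w^e\bigl(1-\frac{dP^{\pi^*}}{dP^{\pi^e}}\bigr)$. Then Assumption \ref{assump-w-bound} and Condition \eqref{eq:pi-approximate} give $\norm{w^e-w^*}_{L_2(P^{\pi^b})}\le \Psi\zeta\le\Sigma\zeta$, because $\Phi<1$. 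This yields the $2\Sigma^2\zeta^2$ term. The rest of the plan is to show $\bE_\bS\norm{\widehat w-w^e}^2_{L_2(P^{\pi^b})}\lesssim \Sigma^4 m^{-2\alpha/(d+2\alpha)}(\log m)^3$.

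\textbf{Step 1 (curvature).} We would relate the $L_2$ error to the excess risk $\cR(w)-\cR(w^e)$. By the representation of $\cR$ given in Section \ref{method}, the excess risk equals
\begin{align*}
\bE_{P^{\pi^b}}\bigl[w^{3/2}-w^e{}^{3/2}-\tfrac32 (w^e)^{1/2}(w-w^e)\bigr]+\bE_{P^{\pi^b}}\bigl[\tfrac32(w^e)^{1/2}(w-w^e)-3w^e(w^{1/2}-(w^e)^{1/2})\bigr].
\end{align*}
Equivalently, it is the pointwise Bregman-type integrand $\phi(w)-\phi(w^e)-\phi'(w^e)(w-w^e)$, with $\phi$ built from $t^{3/2}$ and the $t^{1/2}$ part. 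A direct computation gives the pointwise integrand $f(w)=w^{3/2}-3w^ew^{1/2}+2(w^e)^{3/2}=(w^{1/2}-(w^e)^{1/2})^2(w^{1/2}+2(w^e)^{1/2})$. Since $w^{1/2}+2(w^e)^{1/2}\ge 2\Phi^{1/2}$ and $(w^{1/2}-(w^e)^{1/2})^2=(w-w^e)^2/(w^{1/2}+(w^e)^{1/2})^2\ge (w-w^e)^2/(4\Sigma)$ for $w,w^e\le\Sigma$, we expect
\begin{align*}
\norm{w-w^e}^2_{L_2(P^{\pi^b})}\le C\,\Sigma^{c}\bigl(\cR(w)-\cR(w^e)\bigr),
\end{align*}
with a polynomial power of $\Sigma$. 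To keep $w^{1/2}$ Lipschitz, we would also clip the networks in $\mathcal U$ below at a small positive level, for instance $\Phi$; this is harmless because $w^e\ge\Phi$.

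\textbf{Step 2 (decomposition).} We would use the standard ERM inequality
\begin{align*}
\cR(\widehat w)-\cR(w^e)\le 2\sup_{w\in\mathcal U}|\cR(w)-\widehat\cR_\bS(w)|+\inf_{w\in\mathcal U}\bigl(\cR(w)-\cR(w^e)\bigr).
\end{align*}
For the approximation term, we would invoke a ReLU approximation theorem for $\cH^\alpha$ (Shen--Yang--Zhang or Schmidt-Hieber type). With the stated size, depth and weight bound, it gives an approximant $\bar w$ with $\norm{\bar w-w^e}_\infty\lesssim m^{-\alpha/(d+2\alpha)}$. Step 1's upper counterpart, the integrand bounded by $C\Sigma^{c}(w-w^e)^2$, then gives an approximation error of order $m^{-2\alpha/(d+2\alpha)}$.

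\textbf{Step 3 (statistical error; main obstacle).} The main obstacle is getting the squared rate $m^{-2\alpha/(d+2\alpha)}$ rather than its square root. A crude uniform deviation bound via the pseudo-dimension, $\mathrm{Pdim}\lesssim \mathcal S\mathcal D\log\mathcal S$, gives only $\sqrt{\mathcal S\mathcal D\log m/m}$. That matches $m^{-\alpha/(d+2\alpha)}(\log m)^{3/2}$, which would only bound $\norm{\widehat w-w^e}^2$ at the unsquared rate. I would therefore first try a localized argument, since Step 1 provides a Bernstein condition: the loss $w\mapsto w^{3/2}(x^b)-3w^{1/2}(x^e)$ has variance bounded by $\Sigma^{c}\norm{w-w^e}^2\lesssim\Sigma^{c}(\cR(w)-\cR(w^e))$. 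A local Rademacher or offset argument with covering numbers $\log\cN\lesssim\mathcal S\mathcal D\log(m\mathcal B)$ then gives a fast rate of order $\mathcal S\mathcal D\log m/m\asymp m^{-2\alpha/(d+2\alpha)}(\log m)^3$. This is consistent with the $(\log m)^3$ in \eqref{eq:w-error+zeta}.

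\textbf{Combining.} Combining the three steps, tracking $\Sigma$ (Lipschitz constants of $t^{3/2},t^{1/2}$ on $[\Phi,\Sigma]$ together with the curvature constant, which I expect to produce $\Sigma^4$), and adding the $\zeta$ term gives \eqref{eq:w-error+zeta}. With $\zeta=\cO(m^{-\alpha/(d+2\alpha)})$, the term $2\Sigma^2\zeta^2$ is absorbed into the first term, which gives \eqref{eq:w-error}.
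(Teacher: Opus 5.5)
Your proposal is correct and follows essentially the paper's route. The shared steps are: the split at $w^e$ with $\norm{w^e-w^*}_{L_2(P^{\pi^b})}\le\Psi\zeta\le\Sigma\zeta$; the two-sided curvature bound $C_1\norm{w-w^e}^2_{L_2(P^{\pi^b})}\le\cR(w)-\cR(w^e)\le C_2\norm{w-w^e}^2_{L_2(P^{\pi^b})}$ (your factorization $(w^{1/2}-(w^e)^{1/2})^2(w^{1/2}+2(w^e)^{1/2})$ is the paper's identity rewritten); ReLU approximation of $w^e\in\cH^{\alpha}$; and a fast $\log\cN/m$ statistical bound driven by the variance--excess-risk relation. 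Your lower clipping at $\Phi$ also makes explicit a range condition on $\mathcal U$ that the paper uses tacitly.

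The one adjustment concerns Step~2. Your inequality with the uncentered $\sup_{w\in\mathcal U}|\cR(w)-\widehat{\cR}_{\bS}(w)|$ cannot yield the fast rate, so, as you anticipate in Step~3, it should be replaced by the offset decomposition the paper uses: $\cR(\widehat w)-\cR(w^e)=[\cR(\widehat w)-2\widehat{\cR}_{\bS}(\widehat w)+\cR(w^e)]+2[\widehat{\cR}_{\bS}(\widehat w)-\cR(w^e)]$. The first bracket is handled by Bernstein's inequality plus a union bound over a $1/m$-cover of $\mathcal U$. The second has expectation at most $2(\cR(\overline w)-\cR(w^e))$, with $\overline w$ the best approximant in $\mathcal U$.
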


\begin{remark}
	Boundedness Assumption \ref{assump-w-bound} 
	is a technical condition and is widely used in density ratio estimation \cite{Cortes2010learning,feng2024deep} to ensure the stability of the estimation procedure.  
	Assumption \ref{assump-w-holder} imposes a smoothness condition on the target function, widely used in deep nonparametric estimation \cite{bauer2019deep,schmidt2020nonparametric,Jiao2023deep} to enhance the approximation capability of neural networks and to provide theoretical support for convergence analysis.   
\end{remark}

Combining  Lemma \ref{lem:w-err} with Eq. \eqref{eq:er},  
we obtain the bound for Term I, which is given by
$$
\mathcal{L}_{w^*}(\widehat{Q}) - \mathcal{L}_{\widehat{w}}(\widehat{Q}) \leq\cO\Big(\frac{R_{\max}\Sigma^2 m^{-\frac{\alpha}{d+2\alpha}}(\log m)^\frac{3}{2}}{1-\gamma} \Big)+\frac{2\zeta\Sigma R_{\max}}{1-\gamma}.
$$
Similarly, the bound for Term II is
$$
\mathcal{L}_{\widehat{w}}(Q^*) - \mathcal{L}_{w^*}(Q^*)\leq\cO\Big(\frac{R_{\max}\Sigma^2 m^{-\frac{\alpha}{d+2\alpha}}(\log m)^\frac{3}{2}}{1-\gamma} \Big)+\frac{2\zeta\Sigma R_{\max}}{1-\gamma}.
$$



~~\\
\noindent
\textbf{Bound Term III:}
This term is referred to as the statistical error.
We bound it by using the Rademacher complexity technique \cite{wellner2013weak}. The Rademacher complexity
is formally defined as follows. First, we define
\begin{equation}\label{eq:lwhat-Q-Z}
	\ell_{\widehat{w}} (Q;Z):= \widehat{w}(X,A)[Q(X,A)-{\cT^*} Q(X,A)],~Z:=(X,A,R,X^{\prime}),
\end{equation}
and $\bD=\{Z_i\}_{i=1}^n=\{X_i, A_i, R_i, X^{\prime}_i\}_{i=1}^n$ 
(see Section \ref{method}). The empirical Rademacher complexity of $\mathcal{Q}$ is defined by
\begin{equation*}
	\mathfrak{R}_n(\mathcal{Q}\mid \mathbb{D}): =\mathbb{E}_{\tau \mid \bD}\left[\sup_{Q \in \cQ} \Big| 
	\frac{1}{n}\sum_{i = 1}^n\tau_i\ell_{\widehat{w}} (Q;Z_i)
	\Big|  \mid\bD\right],
\end{equation*}
where $\eta>0$ and $\{\tau_i\}_{i=1}^n$ are i.i.d. Rademacher random variables with $P({\tau_i}=1)=P(\tau_i=-1)=\frac{1}{2}$. The Rademacher complexity of $\cQ$ is given by
\begin{equation*}
	\mathfrak{R}_n(\mathcal{Q}): =\mathbb{E}_{\bD}\mathfrak{R}_n(\mathcal{Q}\mid \mathbb{D}) =\mathbb{E}_{\bD,\tau}\left[\sup_{Q \in \cQ} \Big| 
	\frac{1}{n}\sum_{i = 1}^n\tau_i\ell_{\widehat{w}} (Q;Z_i)
	\Big| \right].   
\end{equation*}
By the empirical covering number and Hoeffding's inequality \cite[Theorem D.2]{mohri2018foundations}
(see Lemma \ref{lemmab3}), we can derive the following expression
\begin{equation*}
	\mathfrak{R}_n(\mathcal{Q})\leq \sqrt{\frac{2C^2\log \mathcal{N}(Q, \delta, \|\cdot\|_\infty)}{n}}+ \frac{4C^2}{n}+(1+\gamma)\Sigma\delta,
\end{equation*}
where $C:=\frac{2R_{\max}\Sigma}{1-\gamma}$. In our analysis, we transform the statistical error into the Rademacher complexity for estimation purposes, thereby establishing
\begin{equation*}
	\mathbb{E}_{\bD} \left[\sup_{Q\in \cQ} \Big| \cL_{\widehat w}(Q)-\widehat{\cL}_{\widehat w}(Q)  \Big| \right]\leq 2\mathfrak{R}_n(\mathcal{Q}).
\end{equation*}
From these, we can obtain the statistical error bound, as shown in the following lemma.

\begin{lemma}
	\label{lem:LQ-sta}
	Assume that  the function class $\mathcal{Q}$ is set with the size $\mathcal{S} = \mathcal{O}\left(n^{\frac{d}{d + 2\beta}} \log n\right)$, the depth $\mathcal{D} = \mathcal{O}(\log n)$ and the bound $\mathcal{B} = \mathcal{O}\left(n^{\frac{d}{d + 2\beta}}\right)$, then the statistical error satisfies
	\begin{equation}
		\mathbb{E}_{\bD} \left[\sup_{Q\in \cQ} \Big| \cL_{\widehat w}(Q)-\widehat{\cL}_{\widehat w}(Q)  \Big| \right]\leq \cO\Big(\frac{R_{\max}\Sigma n^{-\frac{\beta}{d+2\beta}}(\log n)^\frac{3}{2}}{1-\gamma}\Big).
	\end{equation}
\end{lemma}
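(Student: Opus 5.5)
We prove Lemma \ref{lem:LQ-sta} with the route sketched just before its statement: symmetrization, then a discretization of $\cQ$ in sup-norm, then a covering-number bound for ReLU networks. Throughout we condition on $\bS$, which fixes $\widehat w$; since $\bS$ is independent of $\bD$, the pairs $Z_i$ remain i.i.d. given $\widehat w$. Every $Q\in\cQ$ is taken clipped to $[0,R_{\max}/(1-\gamma)]$, and $\widehat w$ is bounded by $\Sigma$ because $\mathcal U$ is. These give the envelope $|\ell_{\widehat w}(Q;Z)|\le C:=2R_{\max}\Sigma/(1-\gamma)$.

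\textbf{Step 1 (symmetrization).} Write $\cL_{\widehat w}(Q)$ as the absolute value of $\bE\,\ell_{\widehat w}(Q;Z)$. Write $\widehat\cL_{\widehat w}(Q)$ as the absolute value of the corresponding empirical mean. This requires identifying $Y_i$ with $\cT^*Q(X_i,A_i)$ up to a conditionally mean-zero noise term, which is absorbed into the same envelope. By the reverse triangle inequality, the statistical error is at most $\sup_Q|\bE\ell-\frac1n\sum_i\ell(Q;Z_i)|$. A ghost sample and Rademacher signs then give
\begin{equation*}
\bE_\bD\sup_{Q\in\cQ}\big|\cL_{\widehat w}(Q)-\widehat\cL_{\widehat w}(Q)\big|\le 2\mathfrak R_n(\cQ).
\end{equation*}

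\textbf{Step 2 (discretization).} Take a $\delta$-cover $\{Q_j\}$ of $\cQ$ in $\|\cdot\|_\infty$. The map $Q\mapsto Q-\cT^*Q$ is $(1+\gamma)$-Lipschitz in sup-norm, because $|\max_{a'}Q-\max_{a'}Q_j|\le\|Q-Q_j\|_\infty$. Consequently $|\ell_{\widehat w}(Q;Z)-\ell_{\widehat w}(Q_j;Z)|\le(1+\gamma)\Sigma\delta$. On the finite class, Massart's finite-class lemma (the Hoeffding-based \cite[Theorem D.2]{mohri2018foundations}, Lemma \ref{lemmab3}) applies to variables bounded by $C$, with the absolute value handled by symmetrizing the class with $\pm$. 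This yields
\begin{equation*}
\mathfrak R_n(\cQ)\le\sqrt{\frac{2C^2\log\cN(\cQ,\delta,\|\cdot\|_\infty)}{n}}+\frac{4C^2}{n}+(1+\gamma)\Sigma\delta.
\end{equation*}

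\textbf{Step 3 (covering number).} For ReLU networks of depth $\cD$, size $\cS$ and weight bound $\cB$, we use the standard parameter-perturbation bound of Schmidt-Hieber type:
\begin{equation*}
\log\cN(\cQ,\delta,\|\cdot\|_\infty)\lesssim \cS\,\cD\log\big(\cB\,\cW\,\cD/\delta\big)\lesssim \cS\,\cD\log(\cB\cS/\delta).
\end{equation*}
With $\cS\asymp n^{d/(d+2\beta)}\log n$, $\cD\asymp\log n$, $\cB\asymp n^{d/(d+2\beta)}$ and $\delta=1/n$, this gives $\log\cN\lesssim n^{d/(d+2\beta)}(\log n)^3$. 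Substituting into Step 2,
\begin{equation*}
\mathfrak R_n\lesssim C\sqrt{n^{d/(d+2\beta)-1}(\log n)^3}=C\,n^{-\beta/(d+2\beta)}(\log n)^{3/2}.
\end{equation*}
The remaining terms $4C^2/n$ and $(1+\gamma)\Sigma/n$ are of lower order. Substituting $C=2R_{\max}\Sigma/(1-\gamma)$ gives the claimed rate, first conditionally on $\bS$ and then after taking expectation over $\bS$.

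\textbf{Main obstacle.} The delicate point is the covering-number step. We need an explicit sup-norm cover of sparse ReLU networks whose logarithm scales as $\cS\cD\log(\cdot)$ and not worse, since this produces the exact $(\log n)^{3/2}$ factor; we would cite the Schmidt-Hieber lemma for it. A second point is to justify that the response $Y_i$, rather than $\cT^*Q(X_i,A_i)$, enters only through a bounded mean-zero perturbation, so that the envelope $C$, adjusted by a constant factor, still applies.
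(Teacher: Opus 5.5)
Your proposal is correct and follows essentially the same route as the paper's proof. Both arguments use the reverse triangle inequality and ghost-sample symmetrization to reach $2\mathfrak{R}_n(\cQ)$, then a $\delta$-cover of $\cQ$ in $\|\cdot\|_\infty$ via the $(1+\gamma)\Sigma$-Lipschitz dependence of $\ell_{\widehat w}$ on $Q$, a Hoeffding/union-bound control of the finite class (your Massart step), and the ReLU covering bound $\log\cN\lesssim\cS\cD\log(\cB\cW\cD/\delta)$ with $\delta=1/n$.

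Your remark that $Y_i$ differs from $\cT^*Q(X_i,A_i)$ by a conditionally mean-zero term makes explicit an identification the paper uses silently; the natural loss is $\widehat w(X,A)\bigl(Q(X,A)-R-\gamma\max_{a'}Q(X',a')\bigr)$, whose mean is exactly the population quantity. Your exponent $(\log n)^{3/2}$ also matches the statement, whereas the paper's last display writes $(\log n)^3$.
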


~~\\
\noindent
\textbf{Bound Term IV:} 
This term  denotes  the approximation error. Under Assumption \ref{assump-Q-holder}, this error can be directly bounded by using results of \cite{jiao2025deep}.

Consequently, combining the bounds of these four terms, we can complete the proof of this theorem. Further details are available in Appendix \ref{sec:proof-LQ-sta}.


\section{Numerical Experiments}\label{sec:numerical}
In this section, we present numerical experiments to evaluate our proposed method in two different environments, comparing its performance against several existing approaches. Specifically, we select DQN \cite{fan2020theoretical}, MABO \cite{xie2020Qapproximation}, and DDPG \cite{sumiea2024deep} as  baseline algorithms for comparison.
In Section \ref{subsec:linear}, we focus on a linear environment with a two-dimensional continuous state   and action space. Subsequently, in Section \ref{subsec:DeepMind}, we   evaluate our method on the MountainCarContinuous environment from OpenAI Gym.

\subsection{Linear Environment}\label{subsec:linear}
We begin by describing the mechanism for generating simulated data. Each individual $i \in [N]$ is initialized with a state $\mathbf{X}_0^i \sim \mathcal{N}(0, \mathbf{I}_2)$, where $\mathbf{I}_2$ denotes the 2-dimensional identity matrix. Over the time horizon $0 \leq t \leq T - 1$, each individual selects 2-dimensional continuous action $\mathbf{A}_t^i$ according to a stationary behavioral policy.
The state transition are given by:
$\mathbf{X}_{t+1}^i = \mathbf{F}\mathbf{X}_t^i + \mathbf{G}\mathbf{A}_t^i + \boldsymbol{\xi}_t^i,$
where $\mathbf{X}_t^\prime =(x_t,y_t) \in \mathbb{R}^2$ is the state vector; $\mathbf{A}_t^\prime = (f_x,f_y) \in \mathbb{R}^2$ represents the action vector; and $\boldsymbol{\xi}_t^i \sim \mathcal{N}(0, 0.01\mathbf{I}_2)$ is Gaussian noise. Both $\mathbf{F}$ and $\mathbf{G}$ are undetermined 2nd-order matrices. The immediate rewards at each time step $t$ are given by:
$R_t = \max\{10-(\mathbf{X}_t^\top\mathbf{X}_t + \mathbf{A}_t^\top\mathbf{R}\mathbf{A}_t) + \varepsilon_t,0\},$
where $\mathbf{R}$ is an undetermined second-order matrix that weights the action cost, and $\varepsilon_t \sim \mathcal{N}(0, 0.01)$ represents reward noise. Besides, we set the discount factor as $\gamma=0.9$.
To incorporate population heterogeneity, we define three groups with distinct configurations of $\mathbf{F}, \mathbf{G},$ and $\mathbf{R}$. Specific details can be found in Table \ref{tab:simple-exp_params1}.

\begin{table}[H]
	\centering
	\caption{System Parameters for Different Experiments}
	\label{tab:simple-exp_params1}
	\begin{tabular}{cccc}
		\toprule
		Parameters &Experiment 1 & Experiment 2& Experiment 3 \\
		\midrule
		$\mathbf{F}$ & $\text{diag}(0.5, 0.5)$ & $\text{diag}(0.6, 0.6)$ & $\text{diag}(0.7, 0.7)$ \\
		\midrule
		$\mathbf{G}$ & $\text{diag}(0.5, 0.5)$ & $\text{diag}(0.4, 0.4)$ & $\text{diag}(0.3, 0.3)$ \\
		\midrule
		$\mathbf{R}$ & $\text{diag}(0.25, 0.25)$ & $\text{diag}(0.16, 0.16)$ & $\text{diag}(0.09, 0.09)$ \\
		\bottomrule
	\end{tabular}
\end{table}

For the above environment, we approximate the optimal policy $\pi^{\star}$ using the numerical solution to the Linear Quadratic Regulation (LQR) \cite{agarwal2019reinforcement}, and define the expert policy as:
\begin{equation}\label{eq:require-expert-policy}
	\pi^e = (1 - \tau)\pi^{\star} + \tau \varepsilon_t .
\end{equation}
To obtain expert demonstrations, we set $\tau = 0.05$.

\vspace{1em}
\noindent\textbf{Experiment Results.} We design the experimental setup based on parameters specified in Table \ref{tab:simple-exp_params1} as follows: Each epoch consists of 20 steps, with a total of 10 training epochs while recording cumulative reward curves. Additionally, we perform an evaluation after each training epoch and carry out $10$ additional evaluation epochs upon training completion. By calculating the cumulative discounted rewards, we determine the theoretical maximum achievable rewards for Experiments $1$, $2$, and $3$ to be approximately $87.84$. Consequently, we set reward asymptotes of $88$ in Figures \ref{fig:exp_linear-1}-\ref{fig:exp_linear-3} as references for the upper bounds. Furthermore, Table \ref{tab:simple-exp_params2} provides the mean of the cumulative discounted rewards during the evaluation phase across different methods and experiments, serving as a supplement to the experimental configuration.

From the curve trends in Figures \ref{fig:exp_linear-1}-\ref{fig:exp_linear-3}, we can conclude that: 
\begin{itemize}
	\item Training Reward: As training progresses, the rewards of all methods exhibit an upward trend. In the initial stage, there are slight fluctuations in rewards due to iterative adjustments of the strategies. Subsequently, the reward curves stabilize, entering a phase of steady optimization.
	\item Evaluation Reward: Our method consistently outperforms other algorithms, demonstrating clear advantages in policy generalization. Across all evaluation epochs, it maintains competitive reward levels with smaller fluctuations than some baseline methods, highlighting its strong adaptability to the evaluation environment.
\end{itemize}

From the quantitative data in Table \ref{tab:simple-exp_params2} (Bold values indicate the best performance among the compared algorithms.), our method demonstrates superiority in average reward across the three experiments, outperforming DQN, DDPG, and MABO and approaching the theoretical maximum reward. Nevertheless, algorithms such as DQN, DDPG, and MABO still exhibit competitive performance in certain scenarios.

\begin{table}[H]
	\centering
	\caption{Means of Linear Environment Results}
	\label{tab:simple-exp_params2}
	\begin{tabular}{cccc}
		\toprule
		Algorithm & Experiment 1 & Experiment 2 & Experiment 3 \\
		\midrule
		DQN    & 85.09 & 84.32 & 84.57 \\
		\midrule
		DDPG   & 84.74 & 85.49 & 86.20 \\
		\midrule
		MABO   & 85.02 & 85.64 & 84.64 \\
		\midrule
		Ours   & \textbf{86.65} & \textbf{86.74} & \textbf{87.10} \\
		\bottomrule
	\end{tabular}
\end{table}

\begin{figure}[th]
	\centering
	\includegraphics[width=0.85\textwidth]{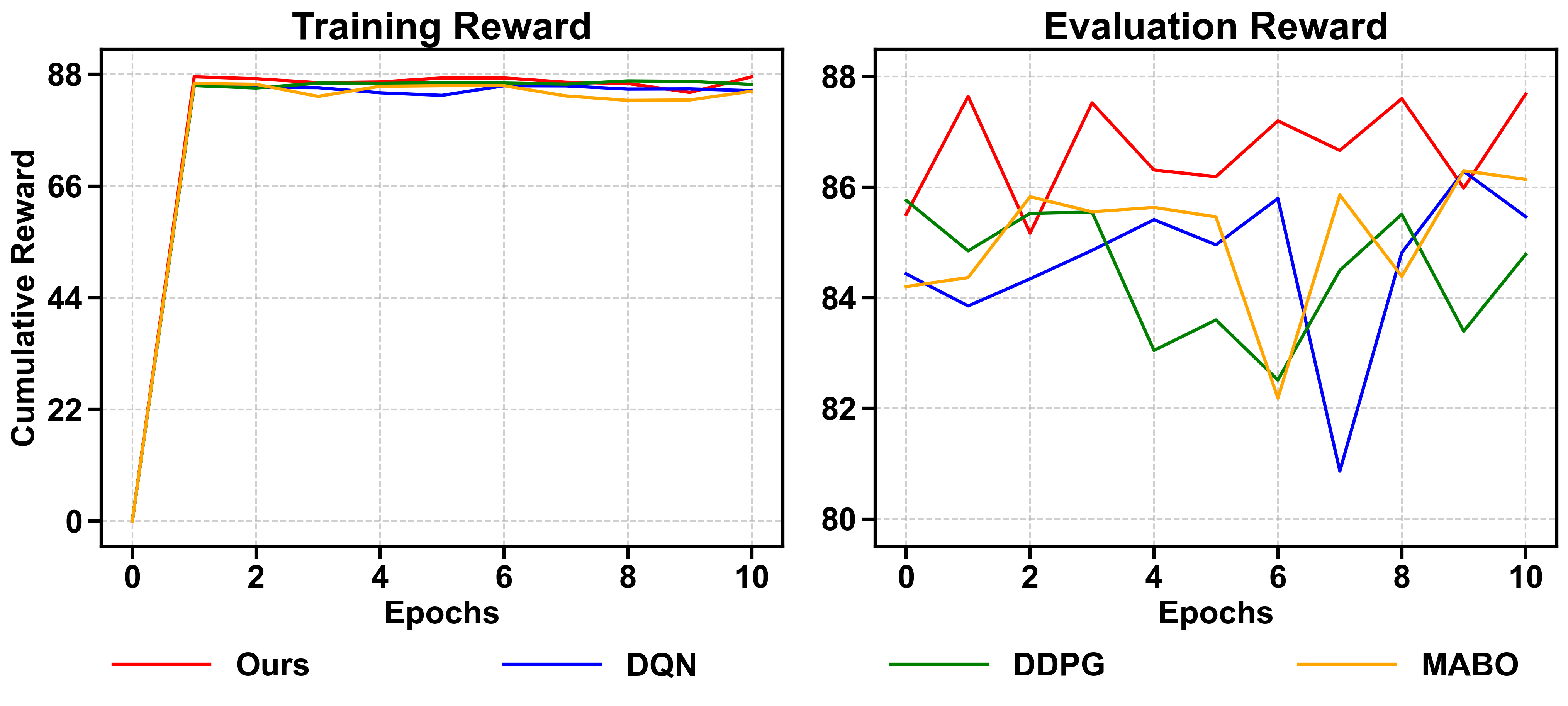} 
	\caption{Numerical results of Experiment 1. The theoretical optimal value is 87.84. Left: Cumulative reward variation with training epochs; Right: Evaluation reward after training completion.} 
	\label{fig:exp_linear-1} 
\end{figure}
\vspace{-0.5em}
\begin{figure}[th]
	\centering
	\includegraphics[width=0.85\textwidth]{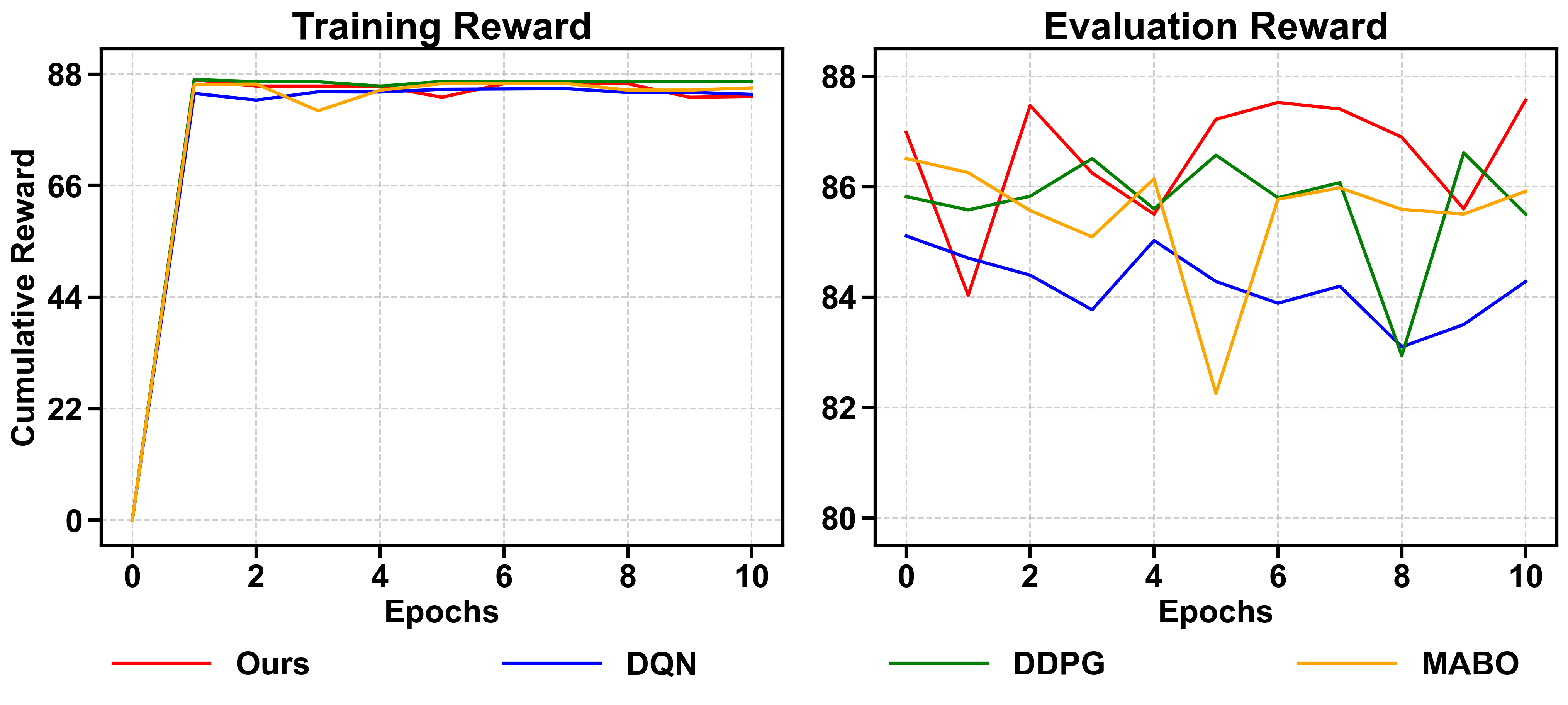} 
	\caption{Numerical results of Experiment 2. The theoretical optimal value is 87.84. Left: Cumulative reward variation with training epochs; Right: Evaluation reward after training completion.} 
	\label{fig:exp_linear-2} 
\end{figure}
\vspace{-0.5em}
\begin{figure}[H]
	\centering
	\includegraphics[width=0.85\textwidth]{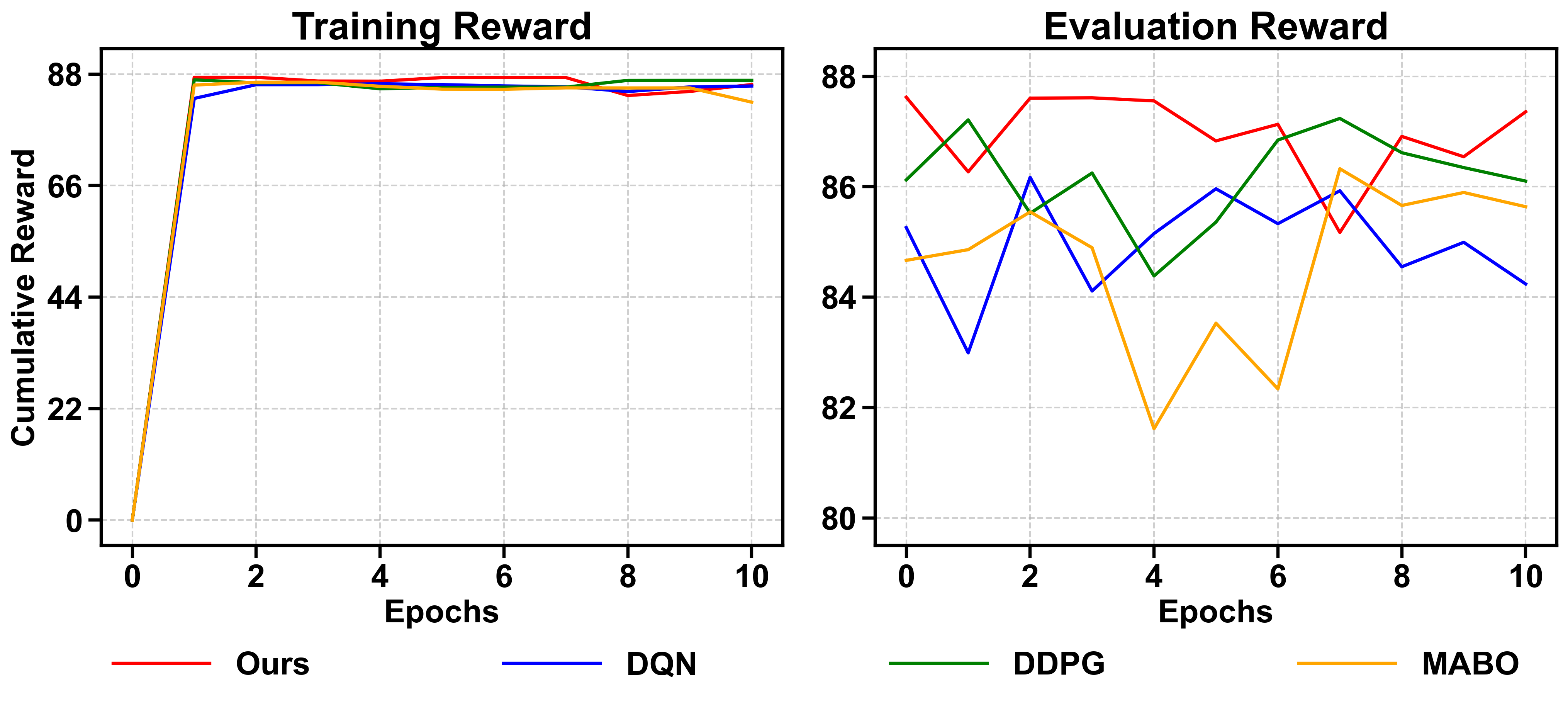} 
	\caption{Numerical results of Experiment 3. The theoretical optimal value is 87.84. Left: Cumulative reward variation with training epochs; Right: Evaluation reward after training completion.} 
	\label{fig:exp_linear-3} 
\end{figure}

\subsection{OpenAI Gym Environment}\label{subsec:DeepMind}

In this experiment, we employ the MountainCarContinuous environment from the Gymnasium library as our testbed. This environment was first appeared in Andrew Moore’s PhD thesis \cite{moore1990efficient} and simulates the dynamic process of a car climbing a steep hillside, with the agent's goal being to precisely control the car's acceleration in order to reach the target at the top.

The state of the environment consists of the car's position and velocity. The agent controls the car's acceleration by outputting a force, and its dynamics follow specific physical rules: the velocity is influenced by both the applied force and gravity, while the position accumulates based on the velocity and is subject to boundary constraints. At each timestep, the agent incurs a penalty of $-0.1 \times \text{action}^2$ to discourage large actions. When the car reaches the target position, a reward of $+100$ is granted. The maximum number of steps per episode is $1000$.

Expert demonstrations and behavioral data are collected through the following processes. For the collection of expert demonstrations, we utilize a closed-form preset policy (\url{https://github.com/openai/gym/wiki/Leaderboard}) that achieves efficient climbing through a position threshold. Following the expert policy construction described in Eq. \eqref{eq:require-expert-policy}, we set $\tau=0.1$ and collect demonstrations per episode, resulting in a total of $20$K data entries. For behavioral data, we gather data per episode, with each episode containing up to $1000$ interaction steps. This data includes both randomly sampled actions and expert demonstrations, totaling approximately $100$K entries.

\noindent\textbf{Experiment Results.} The experiment runs for $500$ training epochs, each consisting of a maximum of $1000$ steps or ending when the car reaches the target position. We conduct an evaluation every $20$ training epochs and record the reward curves. After training, we perform $100$ additional evaluation epochs. For the convenience of observing the experiment, we calculate rewards using undiscounted cumulative rewards. The theoretical maximum reward of this experiment is approximately $100$, and we plot an asymptote of $100$ in Figure \ref{fig:exp_mcarc} as a reference for the upper bound of rewards. To further elaborate on the experimental setup, Table \ref{tab:simple-exp_params3} offers comprehensive quantitative details, with a key focus on the mean of cumulative discounted rewards across various methods in the experiment’s evaluation phase.

From the trends observed in Figure \ref{fig:exp_mcarc}, we  conclude the following:
\begin{itemize}
	\item Training Reward: During the training process, the cumulative rewards of different algorithms exhibit varying patterns. Our method’s reward curve stabilizes after initial fluctuations, steadily achieving higher cumulative rewards as training progresses.
	\item Evaluation Reward: During the evaluation phase, our method generally maintains a competitive reward level. It outperforms DQN, DDPG, and MABO in terms of reward, demonstrating superior policy generalization and better adaptability to the evaluation environment.
\end{itemize}

From the quantitative data in Table \ref{tab:simple-exp_params3} (Bold values indicate the best performance among the compared algorithms.), in the MountainCarContinuous environment, our method achieves the highest mean reward compared to DQN, DDPG, and MABO, demonstrating competitive performance. 

\begin{figure}[th]
	\centering
	\includegraphics[width=0.9\textwidth]{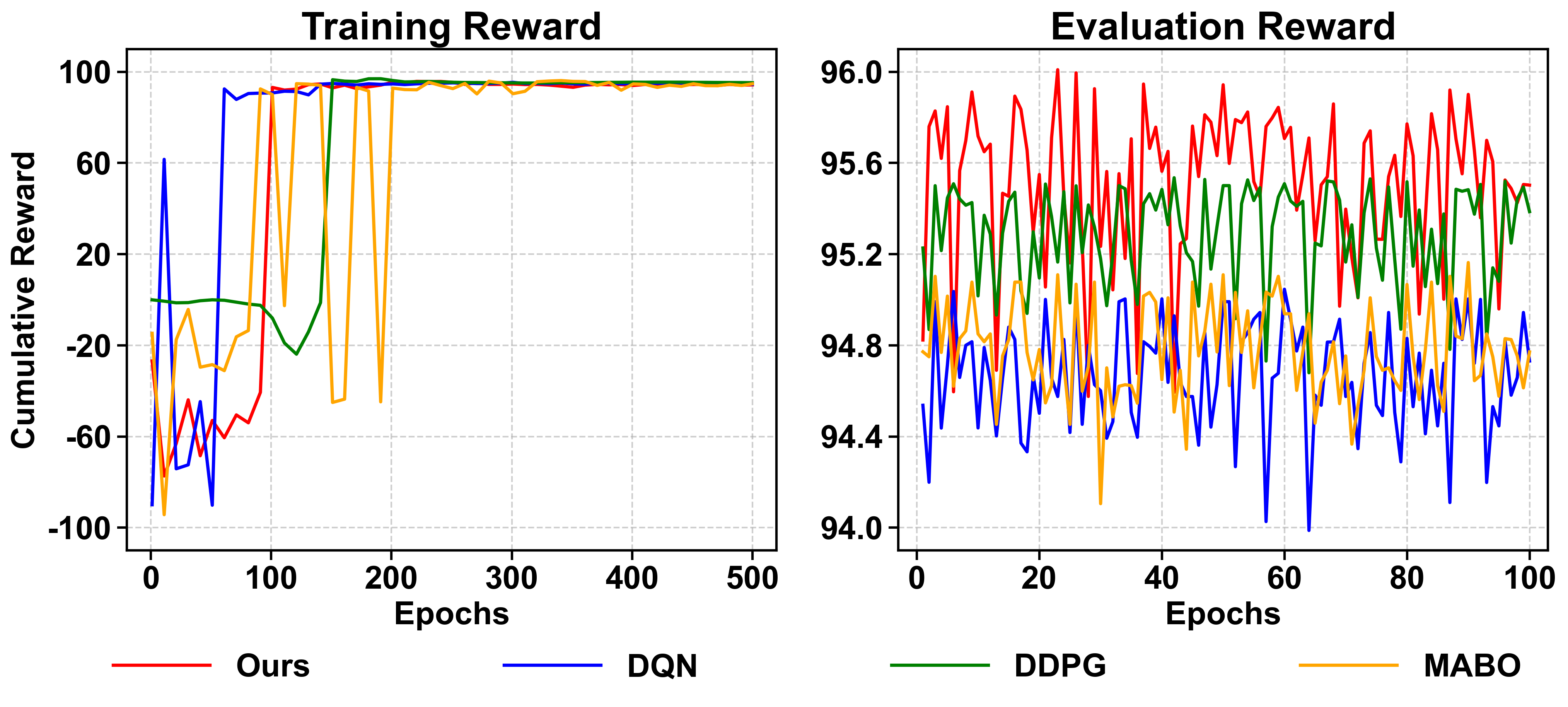} 
	\caption{Numerical analysis results of MountainCarContinuous Experiment. The theoretical optimal value is 100. Left: Cumulative reward variation with training epochs; Right: Evaluation reward after training completion.} 
	\label{fig:exp_mcarc} 
\end{figure}

\vspace{-2em}
\begin{table}[H]
	\centering
	\caption{Means of MountainCarContinuous Environment Results}
	\label{tab:simple-exp_params3}
	\begin{tabular}{ccccc}
		\toprule
		Algorithm & DQN & DDPG & MABO & Ours\\
		\midrule
		Rewards &	94.67  & 95.29 & 94.78 & \textbf{95.51}\\
		\bottomrule
	\end{tabular}
\end{table}

\section{Conclusion}\label{conlusion}
This work introduces a weighted Bellman residual minimization framework designed for deep $Q^* $ estimation. The key feature of our framework is the estimator, which integrates both expert demonstrations and behavioral data to estimate density ratio weights. Notably, our approach dispenses with predefined distributional assumptions, explicitly captures distribution shifts in offline reinforcement learning, and relaxes the conventional completeness requirement.
Theoretically, under the simplification of ignoring logarithmic orders and other terms, we derive a sharp convergence rate for density ratio estimation as $\cO\Big(m^{-\frac{2\alpha}{d+2\alpha}}\Big)$ and establish the convergence rate for the excess risk in $Q^*$ estimation as $\mathcal{O}\left(n^{-\frac{\beta}{d+2\beta}}\right)$.
Experimental results demonstrate that our method provides significant improvements in both numerical performance and policy generalization. Furthermore, it offers valuable practical insights for efficiently utilizing expert demonstrations in real-world applications. In the future, we plan to explore several directions, including expanding the expert paradigm to a broader range of scenarios, and testing its effectiveness in more complex real-world environments, such as robotics control.

\section*{CRediT authorship contribution statement}
\textbf{Lican Kang:} Methodology, Investigation, Conceptualization. \textbf{Jerry Zhijian Yang:} Supervision, Project administration, Funding acquisition. \textbf{Cheng Yuan:} Writing – review \& editing. \textbf{Chen Zhong:} Methodology, Visualization, Software, Data curation.

\section*{Declaration of competing interest}
The authors declare that they have no known competing financial interests or personal relationships that could have appeared to influence the work reported in this paper.  

\section*{Data availability}
No data was used for the research described in the article.

\section*{Acknowledgements}
This work is supported by the National Nature Science Foundation of China (No.12125103, No.U24A2002, No.12301558, No.12501380), the Hubei Natural Science Foundation (No.2025AFA002), and the Fundamental Research Funds for the Central Universities.

\appendix
\section{Proofs of Lemmas and Theorem}\label{append}
In this appendix, we give the detailed proofs for all  lemmas and theorem in this paper.

\subsection{Proof of Lemma \ref{lem:w-err}}
\begin{proof}
	We first verify the strong convexity of 
	$\cR(\cdot)$ at ${w^e}$. For any $ w $,
	\begin{align*}
		&\cR(w)-\cR(w^{e})=\mathbb{E}_{(X,A) \sim P^{\pi^b} }
		\left[
		\begin{aligned}
			&w(X,A)^{\frac{1}{2}}\Big(w(X,A) - w^e(X,A) \Big) \\
			&- 2w^e(X,A)\Big(w(X,A)^{\frac{1}{2}} - w^e(X,A)^{\frac{1}{2}}\Big)
		\end{aligned}
		\right]\\
		&= \mathbb{E}_{(X,A) \sim P^{\pi^b} }
		\left[
		\begin{aligned}
			&\frac{w(X,A)-w^e(X,A)}{w(X,A)^{\frac{1}{2}}+{w^e(X,A)}^{\frac{1}{2}}}\bigg(w(X,A)^{\frac{1}{2}}\Big(w(X,A)^{\frac{1}{2}}\\
			&+{w^e(X,A)}^{\frac{1}{2}}\Big)-2{w^e(X,A)} \bigg) 
		\end{aligned}
		\right]\\
		&= \mathbb{E}_{(X,A) \sim P^{\pi^b} }
		\left[
		\begin{aligned}
			&\frac{w(X,A)-w^e(X,A)}{w(X,A)^{\frac{1}{2}}+{w^e(X,A)}^{\frac{1}{2}}}\bigg( \Big(w(X,A)- w^e(X,A) \Big)\\
			&+w^e(X,A)^{\frac{1}{2}} \Big(w(X,A)^{\frac{1}{2}}-w^e(X,A)^{\frac{1}{2}} \Big) \bigg) 
		\end{aligned}
		\right]\\
		&=\mathbb{E}_{(X,A) \sim P^{\pi^b} }
		\left[\Big(w(X,A)-w^e(X,A)\Big)^2 \frac{w(X,A)^{\frac{1}{2}}+2{w^e(X,A)}^{\frac{1}{2}}}{\Big(w(X,A)^{\frac{1}{2}}+{w^e(X,A)}^{\frac{1}{2}}\Big)^2}     \right].
	\end{align*}
	Under Assumption \ref{assump-w-bound}, we have
	\begin{equation}\label{eq:L-lip}
		C_1\norm{w-{w^e}}^2_{L_2 (P^{\pi^b})} \leq \cR(w)-\cR({w^e})\leq C_2\norm{w-{w^e}}^2_{L_2 (P^{\pi^b})},
	\end{equation}
	where $C_1 := 3\Phi^{\frac{1}{2}}/{4\Psi}, C_2 := 3\Psi^{\frac{1}{2}}/{4\Phi}$.
	Now, we turn to analyze the density ratio error terms arising from the decomposition of $\norm{\widehat{w}-w^*}^2_{L_2 (P^{\pi^b})}$ given in Eq. \eqref{eq:w-decompose}. Consider the decomposition
	\begin{equation}\label{eq:w-decompose}
		\begin{aligned}
			&\norm{\widehat{w}-w^*}^2_{L_2 (P^{\pi^b})}\leq 2\norm{\widehat{w}-w^{e}}^2_{L_2 (P^{\pi^b})}+2\norm{w^{e}-w^*}^2_{L_2 (P^{\pi^b})} \\
			&\leq \frac{2}{C_1}\Big(\cR(\widehat{w})-\cR(w^{e})\Big)+2\norm{{w^e}\Big(1 - \frac{dP^{\pi^*}}{dP^{\pi^e}}\Big)}^2_{L_2 (P^{\pi^b})}\\
			&= \frac{2}{C_1}\Big( \cR(\widehat{w})-2\widehat{\cR}_\bS(\widehat{w})+\cR(w^{e})+2\widehat{\cR}_\bS(\widehat{w})-2\cR(w^{e}) \Big)+2\Psi^2\zeta^2\\
			&\leq \frac{2}{C_1}\Big( \cR(\widehat{w})-2\widehat{\cR}_\bS(\widehat{w})+\cR(w^{e}) \Big)+\frac{4}{C_1}\Big(\cR(\overline{w})-\cR(w^{e})\Big)+2\Psi^2\zeta^2\\
			&\leq \frac{2}{C_1}\Big( \cR(\widehat{w})-2\widehat{\cR}_\bS(\widehat{w})+\cR(w^{e}) \Big)+\frac{4C_2}{C_1} \inf _{w\in \cU}\norm{w-w^{e}}^2_{\infty}+2\Psi^2\zeta^2,\\
		\end{aligned}
	\end{equation}
	where $\overline{w}\in \arg\inf\limits_{w\in\cU}\norm{w-w^{e}}^2_{\cR_2(P^{\pi^b})}$.  This decomposition yields error components: the statistical error $\frac{2}{C_1}\Big( \cR(\widehat{w})-2\widehat{\cR}_\bS(\widehat{w})+\cR(w^{e}) \Big)$ and the approximation error $\frac{4C_2}{C_1} \inf\limits_{w\in \cU}\norm{w-w^{e}}^2_{\infty}$.
	
	To analyze the statistical error, we define
	$$\ell(w;
	\vartheta):=w(x_p,a_p)^{\frac{3}{2}}-3w(x_q,a_q)^{\frac{1}{2}},~\vartheta:=(x_p,~a_p,~x_q,~a_q),
	$$
	and
	$$\tilde{\ell}(w;\vartheta):=\ell(w;\vartheta)-\ell(w^{e};\vartheta).$$
	Then, $\tilde{\ell}(w;\vartheta)$ is Lipschitz continuous over $w$, i.e., for each $\vartheta$, we have
	\begin{align*}
		&|\tilde{\ell}(w_1;\vartheta)-\tilde{\ell}(w_2;\vartheta)|\\
		&=|\big(w_1(x_p,a_p)^{\frac{3}{2}}-3w_1(x_q,a_q)^{\frac{1}{2}}\big)-\big(w_2(x_p,a_p)^{\frac{3}{2}}-3w_1(x_q,a_q)^{\frac{1}{2}}\big)|\\
		&=|\big(w_1(x_p,a_p)^{\frac{3}{2}}-w_2(x_p,a_p)^{\frac{3}{2}}\big)+3\big(w_1(x_q,a_q)^{\frac{1}{2}}-w_1(x_q,a_q)^{\frac{1}{2}}\big)|\\
		&=\Bigg|\big(w_1(x_p,a_p)-w_2(x_p,a_p)\big)\frac{w_1(x_p,a_p)+w_1(x_p,a_p)^{\frac{1}{2}}w_2(x_p,a_p)^{\frac{1}{2}} + w_2(x_p,a_p)}{w_1(x_p,a_p)^{\frac{1}{2}}+w_2(x_p,a_p)^{\frac{1}{2}}}\\
		&~~+3\big(w_1(x_q,a_q)-w_1(x_q,a_q)\big)\frac{1}{w_1(x_q,a_q)^{\frac{1}{2}}+w_1(x_q,a_q)^{\frac{1}{2}}}\Bigg|.
	\end{align*}
	Under Assumption \ref{assump-w-bound}, we obtain
	\begin{equation}\label{eq:l-lip}
		\begin{aligned}
			&|\tilde{\ell}(w_1;\vartheta)-\tilde{\ell}(w_2;\vartheta)|\\
			&\leq \frac{3\Psi}{2\Phi^{\frac{1}{2}}}\big|w_1(x_p,a_p)-w_2(x_p,a_p)\big|+\frac{3}{2\Phi^\frac{1}{2}} \big|w_1(x_q,a_q)-w_2(x_q,a_q)\big|\\
			&=\lambda \Big(\big| w_1(x_p,a_p)-w_2(x_p,a_p)\big|+ \big| w_1(x_q,a_q)-w_2(x_q,a_q) \big| \Big),
		\end{aligned}
	\end{equation}
	where $\lambda:=3\Psi/2\Phi^\frac{1}{2}$. Let $\bS^{\prime} := \{U_1^{\prime},\ldots,U_n^{\prime}\}$ be an i.i.d. ghost sample independent of $\mathbb{S} = \{U_1,\ldots,U_n\}$ with $U_i := (X_i, A_i), i = 1,\ldots,n$. Let 
	$$F(w;U) := \mathbb{E}_{\bS^{\prime}}[\tilde{\ell}(w;U^{\prime}) - 2\tilde{\ell}(w;U)],$$
	then, we have
	\begin{align*}
		&\mathbb{E}_{\bS}\left[\cR(\widehat{w})-2\widehat{\cR}_\bS(\widehat{w})+\cR(w^{e})\right]\\
		&=\mathbb{E}_{\bS} \left[-2\big(\widehat{\cR}_\bS(\widehat{w})-\cR(w^{e})\big)+\big(\cR(\widehat{w})-\cR(w^{e})\big)\right]\\
		&=\mathbb{E}_{\bS}\Big[ \frac{1}{m} \sum_{i=1}^{m} \Big\{ \bE_{\bS^{\prime}}\big(  \ell(\widehat{w};U^{\prime})-\ell(w^{e};U^{\prime}) \big)-2\big( \ell(\widehat{w};U)-\ell(w^{e};U) \big)\Big\} \Big] \\
		&=\mathbb{E}_{\bS}\Big[\frac{1}{m} \sum_{i=1}^{m} \Big\{\bE_{\bS^{\prime}}[\tilde{\ell}(\widehat{w};U^{\prime})-2\tilde{\ell}(\widehat{w};U)]\Big\}  \Big]\\
		&=\mathbb{E}_{\bS}\Big[\frac{1}{m} \sum_{i=1}^{m} F(\widehat{w},U) \Big].
	\end{align*}
	Let $\mathcal{N}(\mathcal{U},\delta,\|\cdot\|_{\infty})$ be the covering number of $\mathcal{U}$ with cover $\mathcal{C} = \{w_1,\ldots,w_{\mathcal{N}}\}$. $\forall w\in\mathcal{U}$, there exists a $\tilde{w}\in\mathcal{C}$ such that
	\begin{align*}
		|\tilde{\ell}(w;\vartheta) - \tilde{\ell}(\tilde{w};\vartheta)| &\leq \lambda\|w- \tilde{w}\|_{\infty} \leq \lambda\delta,\\
		F(w;\vartheta) &\leq F(\tilde{w};\vartheta)+3\lambda\delta.
	\end{align*}
	For simplicity, we denote $\mathcal{N}(\mathcal{U},\delta,\|\cdot\|_{\infty})$ as $ \cN $. By Eq. \eqref{eq:l-lip}, we have
	$$
	|\tilde{\ell}(\tilde{w};U_i)|=|\tilde{\ell}(\tilde{w};U_i)-\tilde{\ell}({w^e};U_i)|\leq  4\lambda \Psi,
	$$
	and so
	$$
	|\tilde{\ell}(\tilde{w};U_i)-\mathbb{E}[\tilde{\ell}(\tilde{w};U_i)]|\leq 8\lambda \Psi=M.
	$$
	By Eq.  \eqref{eq:L-lip}, we get
	$$
	C\norm{\tilde{w}-{w^e}}^2_{L_2 (P^{\pi^b})} \leq \cR(\tilde{w})-\cR({w^e}),~~C=C_1.
	$$
	Denote by $\sigma^2 := \mathrm{Var}(\tilde{\ell}(\tilde{w},U_i))$,  then we obtain
	\begin{align*}
		\sigma^2\leq\mathbb{E}[\tilde{\ell}(\tilde{w};U_i)^2]\leq\lambda^2\norm{\tilde{w}-{w^e}}^2_{L_2 (P^{\pi^b})} \leq\frac{\lambda^2}{C}\mathbb{E} [\tilde{\ell}(\tilde{w};U_i)].
	\end{align*}
	Thus, we have
	$$\mathbb{E}[\tilde{\ell}(\tilde{w};U_i)]\geq\frac{\sigma^2C}{\lambda^2}.$$
	Let $t>3\lambda\delta$ and $v:=\frac{t-3\lambda\delta}{2}+\frac{C\sigma^2}{2\lambda^2}$, then $\frac{\sigma^2}{v}\leq \frac{2\lambda^2}{C}$ and $v\geq \frac{t-3\lambda\delta}{2}$. Therefore, $\forall t>0$, by Bernstein's inequality (see Lemma \ref{lemmab3}), we have 
    
    \begin{align*}
		& P \left[\frac{1}{m}\sum_{i = 1}^{m}F(\widehat{w};U_i)>t\right]\leq P\left[\max_{w\in\mathcal{U}}\frac{1}{m}\sum_{i = 1}^{m}F(w;U_i)>t\right]\\
		&~~\leq  P\left[\max_{\tilde{w}\in\mathcal{C}}\frac{1}{m}\sum_{i = 1}^{m}F(\tilde{w};U_i)>t-3\lambda\delta\right]\\
		&~~ \leq\mathcal{N}\cdot \max_{\tilde{w}\in\mathcal{C}} P\left[\frac{1}{m}\sum_{i = 1}^{m}\mathbb{E}_{\mathbb{S}^{\prime}}[\tilde{\ell}(\tilde{w};U_i^{\prime})]-\frac{2}{m}\sum_{i = 1}^{m}\tilde{\ell}(\tilde{w};U_i)>t-3\lambda\delta\right]\\
		&~~=\mathcal{N}\cdot \max_{\tilde{w}\in\mathcal{C}} P\left[
		\begin{aligned}
			&\mathbb{E}_{\mathbb{S}}\Big[\frac{1}{m}\sum_{i = 1}^{m}\tilde{\ell}(\tilde{w};U_i)\Big]-\frac{1}{m}\sum_{i = 1}^{m}\tilde{\ell}(\tilde{w};U_i)>\\
			&\frac{t-3\lambda\delta}{2}+\mathbb{E}_{\mathbb{S}}\Big[\frac{1}{2n}\sum_{i = 1}^{m}\tilde{\ell}(\tilde{w};U_i)\Big]  
		\end{aligned}
		\right]\\
		&~~ \leq\mathcal{N}\cdot \max_{\tilde{w}\in\mathcal{C}} P\left[\mathbb{E}_{\mathbb{S}}\Big[\frac{1}{m}\sum_{i = 1}^{m}\tilde{\ell}(\tilde{w};U_i)\Big]-\frac{1}{m}\sum_{i = 1}^{m}\tilde{\ell}(\tilde{w};U_i)>\frac{t-3\lambda\delta}{2}+\frac{C\sigma^2}{2\lambda^2}\right]\\
		&~~ =\mathcal{N}\cdot \max_{\tilde{w}\in\mathcal{C}}  P\left[\mathbb{E}_{\mathbb{S}}\Big[\frac{1}{m}\sum_{i = 1}^{m}\tilde{\ell}(\tilde{w};U_i)\Big]-\frac{1}{m}\sum_{i = 1}^{m}\tilde{\ell}(\tilde{w};U_i)>v\right]\\
		&~~ \leq\mathcal{N}\exp \Big(\frac{-mv^2}{2(\sigma^2 + Mv)}\Big)\\
		&~~ =\mathcal{N}\exp\Big(-\frac{m(t-3\lambda\delta)}{\frac{8\lambda ^2}{C}+32\lambda \Psi}\Big).
	\end{align*}
	Setting $\epsilon =3\lambda\delta+\epsilon_0,~\delta=\frac{1}{m}$ and $\epsilon_0=\dfrac{(32\lambda \Psi +\frac{8\lambda ^2}{C})\log\mathcal{N}}{m}$, we have 
	\begin{align*} 
		\mathbb{E}_{\bS}\left[\cR(\widehat{w})-2\widehat{\cR}_\bS(\widehat{w})+\cR({w^e})\right]&\leq \epsilon+\int_{\epsilon}^{\infty}\mathcal{N} \exp\Big(-\frac{m(t-3\lambda\delta)}{\frac{8\lambda ^2}{C}+32\lambda \Psi}\Big) \mathrm{d}t\\
		&\leq \epsilon +\mathcal{N}\exp\left(-\frac{\epsilon_0 m}{32\lambda \Psi +\frac{ 8\lambda^2}{C}}\right)\frac{32\lambda \Psi +\frac{ 8\lambda^2}{C}}{m}\\
		&=\frac{(32\lambda \Psi +\frac{8\lambda ^2}{C})\log\mathcal{N}+3\lambda}{m}\\
		&~~+\mathcal{N}\exp(-\log \mathcal{N})\frac{32\lambda \Psi +\frac{ 8\lambda^2}{C}}{m}\\
		&=\frac{(32\lambda \Psi +\frac{ 8\lambda^2}{C})(\log\mathcal{N}+1)+3\lambda}{m}\\
		&=\frac{(32 C\lambda \Psi +8\lambda^2)(\log\mathcal{N}+1)+3C\lambda}{Cm}.
	\end{align*}
	With Eq. \eqref{eq:w-decompose}, we derive
	\begin{align*}
		\bE_\bS \left[\norm{\widehat{w}-w^*}^2_{L_2 (P^{\pi^b})}\right]
		&\leq \frac{2}{C_1}\mathbb{E}_{\mathbb{S}} \left[\cR(\widehat{w})-2\widehat{\cR}_\bS(\widehat{w})+\cR({w^e})\right]\\
		&~~+\frac{4C_2}{C_1}\inf _{w\in \cU}\norm{w-{w^e}}^2_{\infty}+2\Psi^2\zeta^2\\
		&\leq \frac{2(32 C\lambda \Psi +8\lambda^2)(\log\mathcal{N}+1)+6C\lambda}{C^2m}\\
		&~~+\frac{4C_2}{C_1}\inf _{w\in \cU}\norm{w-{w^e}}^2_{\infty}+2\Psi^2\zeta^2~~(C_1=C)\\
		&\leq \frac{(128\frac{\Psi^3}{\Phi}+64\frac{\Psi^4}{\Phi^2})(\log \cN+1)+12\frac{\Psi^2}{\Phi}}{m}\\
		&~~+4\frac{\Psi^{\frac{3}{2}}}{\Phi^{\frac{3}{2}}} \inf _{w\in \cU}\norm{w-{w^e}}^2_{\infty}+2\Psi^2\zeta^2\\
		&\leq \frac{(128\Sigma^3+64\Sigma^4)(\log \cN+1)+12\Sigma^2}{m}\\
		&~~+4\Sigma^{\frac{3}{2}}\inf _{ w\in \cU}\norm{
			w-{w^e}}^2_{\infty}+2\Sigma^2\zeta^2,
	\end{align*}
	where we define $\Sigma:=\Psi/\Phi$, and the derivation further uses $1/{\Phi}\leq 1/{\Phi^2}.$ Combining this with Lemmas \ref{lem:ApproximationError} and \ref{lem:NBound}, and letting the size $\mathcal{S} = \mathcal{O}\!\left(m^{\frac{d}{d + 2\alpha}} \log m\right)$, the depth $\mathcal{D} = \mathcal{O}(\log m)$, and the weights bound $\mathcal{B} = \mathcal{O}\!\left(m^{\frac{d}{d + 2\alpha}}\right)$, we have
	\begin{align*}
		\bE_\bS \big[\norm{\widehat{w}-w^*}^2_{L_2 (P^{\pi^b})}\big]&\leq \cO\Big(\Sigma^4 m^{-\frac{2\alpha}{d+2\alpha}}(\log m)^3 \Big)+ \cO\Big(\Sigma^3 m^{-\frac{2\alpha}{d+2\alpha}}\Big) +2\Sigma^2\zeta^2\\
		&=\cO\Big(\Sigma^4 m^{-\frac{2\alpha}{d+2\alpha}}(\log m)^3\Big)+2\Sigma^2\zeta^2.
	\end{align*}
	Moreover, by setting $\zeta=\cO\Big(m^{-\frac{\alpha}{d+2\alpha}}\Big)$, we obtain Eq. \eqref{eq:w-error+zeta}.
\end{proof}

\subsection{Proof of Lemma \ref{lem:LQ-sta}} \label{sec:proof-LQ-sta}
\begin{proof}
	Recalling Eq. \eqref{eq:lwhat-Q-Z}, we have 
	\begin{align*}
		|\ell_{\widehat{w}} (Q_1;Z)-\ell_{\widehat{w}} (Q_2;Z)|&\leq \widehat{w}(X,A)|Q_1(X,A)-Q_2(X,A)|\\
		&~~+\widehat{w}(X,A)\gamma|\max_{A^\prime} Q_1(X^\prime,A^\prime)-\max_{A^{\prime}} Q_2(X^\prime,A^\prime)|\\
		& \leq (1+\gamma)\Sigma\max_{(X^{\prime},A^{\prime}) \in \cX \times \cA} |Q_1(X^{\prime},A^{\prime})-Q_2(X^{\prime},A^{\prime})|\\
		&= (1+\gamma)\Sigma\norm{Q_1-Q_2}_{\infty}.
	\end{align*}
	To bound the statistical error, we initiate our analysis by considering the expectation over the distribution $\bD=\{Z_i\}_{i = 1}^n=\{X_i,A_i,R_i,X^{\prime}_{i}\}_{i = 1}^n$, we obtain that
	\begin{align*}
		&\mathbb{E}_{\bD} \left[ \sup_{Q\in \cQ}\Big|  \cL_{\widehat w}(Q)-\widehat{\cL}_{\widehat w}(Q)   \Big| \right]\\
		&=\mathbb{E}_{\bD} \left[ \sup_{Q \in \cQ}
		\Bigg| \Big| \mathbb{E}_{(X,A) \sim P^{\pi^b} }\widehat{w}(X,A)[Q(X,A)-\cT^*Q(X,A)] \Big| \right.\\
		&~~-\left. \Big| \frac{1}{n}\sum_{i = 1}^n \widehat{w}(X_i,A_i)\Big(Q(X_i,A_i)-Y_i\Big) \Big | \Bigg|  \right]\\
		&=\mathbb{E}_{\bD} \left[ \sup_{Q \in \cQ}
		\Bigg| \Big| \mathbb{E}_{(X,A) \sim P^{\pi^b} }[{{\ell}}_{\widehat{w}} (Q;Z)] \Big|
		-\Big| \frac{1}{n}\sum_{i = 1}^n {{\ell}}_{\widehat{w}} (Q;Z_i) \Big | \Bigg|  \right]\\
		&\leq \mathbb{E}_{\bD} \left[ \sup_{Q \in \cQ} \Big|  \mathbb{E}_{(X,A) \sim P^{\pi^b} } [{{\ell}}_{\widehat{w}} (Q;Z)]
		-\frac{1}{n}\sum_{i = 1}^n  {{\ell}}_{\widehat{w}} (Q;Z_i) \Big| \right].
	\end{align*}
	
	Now, we introduce a ghost sample
	$\widetilde{\bD}=\{\widetilde{Z}_i\}_{i = 1}^n=\{ \widetilde{X}_i,  \widetilde{A}_i,  \widetilde{R}_i,  \widetilde{X}^{\prime}_i\}_{i=1}^n$ 
	drawn from $P^{\pi^b}$,  where $\widetilde{\bD}$ is independent of $\bD$. Further, let $\{\tau_i\}_{i=1}^n$ denote a set of Rademacher variables. Substituting the expectation with the empirical mean derived from the ghost sample $\widetilde{\bD}$, we obtain
	\begin{align*}
		&\mathbb{E}_{\bD} \left[ \sup_{Q \in \cQ} \Big|  \mathbb{E}_{(X,A) \sim P^{\pi^b} }
		[{{\ell}}_{\widehat{w}} (Q;Z)]
		-\frac{1}{n}\sum_{i = 1}^n  {{\ell}}_{\widehat{w}} (Q;Z_i) \Big| \right]\\
		&=\mathbb{E}_{\bD} \left[ \sup_{Q \in \cQ} \Big|  \mathbb{E}_{\widetilde{\bD}}[\frac{1}{n}\sum_{i=1}^n 
		{\ell}_{\widehat{w}} (Q;\widetilde{Z}_i)]
		-\frac{1}{n}\sum_{i = 1}^n  {{\ell}}_{\widehat{w}} (Q;Z_i) \Big| \right]\\
		&\leq \mathbb{E}_{\bD,\widetilde{\bD}} \left[ \sup_{Q \in \cQ} \Big| \frac{1}{n}\sum_{i=1}^n \Big( {\ell}_{\widehat{w}} (Q;\widetilde{Z}_i)-{{\ell}}_{\widehat{w}} (Q;Z_i) \Big)\Big| \right]\\
		&=\mathbb{E}_{\bD,\widetilde{\bD},\tau} \left[ \sup_{Q \in \cQ} \Big| \frac{1}{n}\sum_{i=1}^n \tau_i \Big( {\ell}_{\widehat{w}} (Q;\widetilde{Z}_i)-{{\ell}}_{\widehat{w}} (Q;Z_i) \Big)\Big| \right]\\
		& \leq 2\mathbb{E}_{\bD,\tau} \left[ \sup_{Q \in \cQ} \Big| \frac{1}{n}\sum_{i=1}^n \tau_i {\ell}_{\widehat{w}} (Q;Z_i)\Big| \right]\\
		&=2\mathfrak{R}_n(\mathcal{Q}).
	\end{align*}
	Let $\mathcal{N}(\mathcal{Q},~\delta,~\|\cdot\|_{\infty})$ be the covering number of $\mathcal{Q}$ with cover $\mathcal{Q}_{\delta}$, there exists a $Q_{\delta}\in\mathcal{Q}_\delta$ such that $\norm{Q-Q_{\delta}}_{\infty}\leq \delta$, that is
	\begin{align*}
		|\ell_{\widehat{w}} (Q;Z_i)-\ell_{\widehat{w}}(Q_{\delta};Z_i)| \leq (1+y)\Sigma\norm{Q-Q_{\delta}}_{\infty} \leq(1+y)\Sigma\delta.
	\end{align*}
	Thus 
	\begin{align*}
		\left| \frac{1}{n}\sum_{i = 1}^n\tau_i\ell_{\widehat{w}} (Q;Z_i) \right |&\leq \left |\frac{1}{n}\sum_{i = 1}^n\tau_i\ell_{\widehat{w}}(Q_{\delta};Z_i) \right |\\
        &~~ +\frac{1}{n}\sum_{i = 1}^n|\tau_i||\ell_{\widehat{w}} (Q;Z_i)-\ell_{\widehat{w}}(Q_{\delta};Z_i)|\\
		&\leq  \left |\frac{1}{n}\sum_{i = 1}^n\tau_i\ell_{\widehat{w}}(Q_{\delta};Z_i) \right | +(1+\gamma)\Sigma\frac{1}{n}\sum_{i = 1}^n|\tau_i| \norm{Q-Q_\delta}_\infty \\
		&\leq \left |\frac{1}{n}\sum_{i = 1}^n\tau_i\ell_{\widehat{w}}(Q_{\delta};Z_i) \right | +(1+\gamma)\Sigma\delta.
	\end{align*}
	Since $\tau=\{\tau_i\}_{i = 1}^n$ is a sequence of i.i.d. Rademacher variables independent of 
	$\bD=\{Z_i\}_{i=1}^n$, 
	then conditionally on $\bD$ we have
	\begin{align*}
		\mathfrak{R}_n(\mathcal{Q}\mid \bD)=&\mathbb{E}_{\tau \mid \bD}  \left[ \sup_{Q \in \cQ}  \Big| \frac{1}{n}\sum_{i=1}^n \tau_i {\ell}_{\widehat{w}} (Q;Z_i)\Big|\mid\bD\right]\\
		&=\mathbb{E}_{\tau} \left[ \sup_{Q \in \cQ} \Big| \frac{1}{n}\sum_{i=1}^n \tau_i {\ell}_{\widehat{w}} (Q;Z_i)\Big| \right]\\
		&\leq\mathbb{E}_{\tau}\left[ \max_{Q_{\delta}\in\mathcal{Q}_{\delta}} \Big| \frac{1}{n}\sum_{i=1}^n \tau_i {\ell}_{\widehat{w}} (Q;Z_i)\Big| \right] +(1+\gamma)\Sigma\delta,
	\end{align*}
	for any $\eta>0$. 
	Recall that $\{\tau_i\ell_{\widehat{w}}(Q_{\delta};Z_i)\}_{i = 1}^n$ are independent random variables conditioning on $\bD$, then
	$$
	\mathbb{E}_{\tau}[\tau_i\ell_{\widehat{w}}(Q_{\delta};Z_i)] = 0,$$
	and
	$$ -\ell_{\widehat{w}}(Q_{\delta};Z_i)\leq\tau_i\ell_{\widehat{w}}(Q_{\delta};Z_i)\leq\ell_{\widehat{w}}(Q_{\delta};Z_i),\quad i = 1,\ldots,n.
	$$
	By Hoeffding’s inequality (see Lemma \ref{lemmab3}), it yields that for any $Q_{\delta}\in\mathcal{Q}_{\delta}$ and $\xi>0$,
	\begin{align*}
		P_{\tau}\left\{\left| \frac{1}{n}\sum_{i=1}^n \tau_i {\ell}_{\widehat{w}} (Q;Z_i)\right|>\xi\right\}&\leq 2\exp\bigg(-\frac{(n\xi)^2}{2\sum\limits_{i = 1}^n\ell_{\widehat{w}}^2(Q_{\delta};Z_i)}\bigg)\\
		&\leq 2\exp(-\frac{n\xi^2}{2C^2}),
	\end{align*}
	where we used the fact that 
	$$|Q(X,A)| \leq \frac{R_{\max}}{1-\gamma}$$ and $$ \left| \ell_{\widehat{w}}(Q; Z) \right| \leq \frac{2R_{\max}\Sigma}{1-\gamma} =: {{C}}.$$ For simplicity, we denote $\mathcal{N}(\mathcal{Q},\delta,\|\cdot\|_{\infty})$ as $ \cN $. Then
	\begin{align*}
		&\mathbb{E}_{\tau} \left[ \max_{Q_{\delta}\in\mathcal{Q}_{\delta}}\Big| \frac{1}{n}\sum_{i=1}^n \tau_i {\ell}_{\widehat{w}} (Q;Z_i)\Big| \right]\\
		&\leq\int_{0}^{\infty} P_{\tau}\left\{\max_{Q_{\delta}\in\mathcal{Q}_{\delta}}\left( \left| \frac{1}{n}\sum_{i=1}^n \tau_i {\ell}_{\widehat{w}} (Q;Z_i)\right|>\xi\right)\right\}d\xi\\
		&\leq\int_{0}^{\infty} \cN\max_{Q_{\delta}\in\mathcal{Q}_{\delta}} P_{\tau}\left\{\left| \frac{1}{n}\sum_{i=1}^n \tau_i {\ell}_{\widehat{w}} (Q;Z_i)\right| >\xi\right\} d\xi\\
		&\leq A+\int_{A}^{\infty} 2\cN \exp(-\frac{n\xi^2}{2C^2}) d\xi\\
		&\leq  A+ \frac{4\cN C^2}{n} \exp(-\frac{nA^2}{2C^2}).
	\end{align*}
	Setting $A = \sqrt{\frac{2C^2\log \cN}{n}}$, we have
	\begin{gather*}
		\mathbb{E}_{\tau}\left[ \max_{Q_{\delta}\in\mathcal{Q}_{\delta}} \Big| \frac{1}{n}\sum_{i=1}^n \tau_i {\ell}_{\widehat{w}} (Q;Z_i)\Big| \right] \leq \sqrt{\frac{2C^2\log \cN}{n}}+ \frac{4C^2}{n}.
	\end{gather*}
	By applying Lemma \ref{lem:NBound} and choosing $\delta = \frac{1}{n}$ we obtain the following bound
	\begin{align*}
		\mathfrak{R}_n(\mathcal{Q})&\leq \sqrt{\frac{2C^2\log \cN}{n}}+ \frac{4C^2}{n}+(1+\gamma)\Sigma\delta\\
		&\leq \cO \Big( \frac{R_{\max}\Sigma}{1-\gamma}\sqrt{\frac{\cS\cD\log (n\cB\cW\cD)}{n}}\Big).
	\end{align*}
	Considering the size $\mathcal{S} = \mathcal{O}\!\left(n^{\frac{d}{d + 2\beta}} \log n\right)$, the depth $\mathcal{D} = \mathcal{O}(\log n)$, and the weights bound $\mathcal{B} = \mathcal{O}\!\left(n^{\frac{d}{d + 2\beta}}\right)$, we derive the statistical error bound 
	\begin{equation}\label{eq:cL statistical error}
		2\mathbb{E}_{\bD} \left[ \sup_{Q\in \cQ} \Big| \cL_{\widehat w}(Q)-\widehat{\cL}_{\widehat w}(Q)  \Big| \right]
		\leq \cO\Big(\frac{R_{\max}\Sigma n^{-\frac{\beta}{d+2\beta}}(\log n)^3}{1-\gamma}\Big).
	\end{equation}
\end{proof}

\subsection{Proof of Theorem \ref{thm:LQ-err}} \label{sub:poof-of-theorem}
\begin{proof}
	We analyze the excess risk $\cL_{w^*}(\widehat{Q}) - \cL_{w^*}(Q^*)$ using the following decomposition:
	\begin{equation}\label{eq:cL-decompose}
		\begin{aligned}
			\cL_{w^*}(\widehat Q) - \cL_{w^*}(Q^*)&= \cL_{w^*}(\widehat Q)-\cL_{\widehat w}(\widehat Q)+\cL_{\widehat w}(\widehat Q)-\widehat{\cL}_{\widehat w}(\widehat Q)\\
			&~~+\widehat{\cL}_{\widehat w}(\widehat Q)-\widehat{\cL}_{\widehat{w}}(Q)+\widehat{\cL}_{\widehat{w}}(Q)-\cL_{\widehat{w}} (Q)\\
			&~~+\cL_{\widehat{w}} (Q)-\cL_{\widehat{w}}(Q^*)+\cL_{\widehat{w}}(Q^*)-\cL_{w^*}(Q^*)\\
			& \leq \cL_{w^*}(\widehat Q)-\cL_{\widehat w}(\widehat Q)+\cL_{\widehat{w}}(Q^*)-\cL_{w^*}(Q^*)\\
			&~~+\cL_{\widehat w}(\widehat Q)-\widehat{\cL}_{\widehat w}(\widehat Q)+\widehat{\cL}_{\widehat{w}}(Q)-\cL_{\widehat{w}} (Q)\\
			&~~+\cL_{\widehat{w}} (Q)-\cL_{\widehat{w}}(Q^*)\\
			&\leq \cL_{w^*}(\widehat Q)-\cL_{\widehat w}(\widehat Q)+\cL_{\widehat{w}}(Q^*)-\cL_{w^*}(Q^*)\\
			&~~+2\sup_{Q\in \cQ}\Big|\cL_{\widehat w}(Q)-\widehat{\cL}_{\widehat w}(Q)\Big|\\
			&~~+\inf\limits_{Q \in \cQ}\cL_{\widehat{w}}(Q)-\cL_{\widehat{w}}(Q^*).
		\end{aligned}
	\end{equation}
	This decomposition yields four error components: the density ratio errors $\cL_{w^*}(\widehat Q)-\cL_{\widehat w}(\widehat Q)$ and $\cL_{\widehat{w}}(Q^*)-\cL_{w^*}(Q^*)$, the statistical error $2\sup\limits_{Q\in \cQ}\Big|\cL_{\widehat w}(Q)-\widehat{\cL}_{\widehat w}(Q)\Big|$, and the approximation error $\inf\limits_{Q \in \cQ}\cL_{\widehat{w}}(Q)-\cL_{\widehat{w}}(Q^*)$.
	
	Firstly, we bound density ratio error term:
	\begin{align*}
		\cL_{w^*}(\widehat{Q})-\cL_{\widehat{w}}(\widehat{Q})&=\Big| \mathbb{E}_{(X,A) \sim P^{\pi^b} }\Big[{{w^*(X,A)}}\Big({\widehat{Q}(X,A)}-\cT ^* {\widehat{Q}(X,A)}\Big)\Big] \Big|\\
		&~~-\Big| \mathbb{E}_{(X,A) \sim P^{\pi^b} }\Big[{\widehat{w}(X,A)}\Big({\widehat{Q}(X,A)}-\cT ^* {\widehat{Q}(X,A)}\Big)\Big] \Big|\\
		&\leq \Big| \mathbb{E}_{(X,A) \sim P^{\pi^b} }\Big[\Big({w^*(X,A)}-{\widehat{w}(X,A)}\Big)\\
		&~~~~\Big({\widehat{Q}(X,A)}-\cT ^* {\widehat{Q}(X,A)}\Big)\Big] \Big|\\
		&\leq \mathbb{E}_{(X,A) \sim P^{\pi^b} }\left[\Big| \Big({w^*(X,A)}-{\widehat{w}(X,A)}\Big) \right.\\
		&~~~~\left. \Big({\widehat{Q}(X,A)}-\cT ^* {\widehat{Q}(X,A)}\Big) \Big|\right].
	\end{align*}
	Since $|Q(X,A)|\leq \frac{R_{\max}}{1-\gamma}$, then $|\cT^*Q(X,A)|\leq \frac{R_{\max}}{1-\gamma}$. Thus
	\begin{align*}
		\cL_{w^*}(\widehat{Q})-\cL_{\widehat{w}}(\widehat{Q})&\leq \frac{2R_{\max}}{1-\gamma}\norm{{\widehat{w}}-w^*}_{L_2 (P^{\pi^b})}\\
		&\leq \frac{2R_{\max}}{1-\gamma}\norm{{\widehat{w}}-w^e}_{L_2 (P^{\pi^b})}+\frac{2R_{\max}}{1-\gamma}\norm{w^e-w^*}_{L_2 (P^{\pi^b})}\\
		& \leq \frac{2R_{\max}}{1-\gamma}\norm{{\widehat{w}}-w^e}_{L_2 (P^{\pi^b})}+\frac{2\zeta\Sigma R_{\max}}{1-\gamma}.
	\end{align*}
	In the same way, we also obtain
	\begin{align*}
		\cL_{\widehat w}( Q^*)-\cL_{w^*}( Q^*)\leq \frac{2R_{\max}}{1-\gamma}\norm{{\widehat{w}}-w^e}_{L_2 (P^{\pi^b})}+\frac{2\zeta\Sigma R_{\max}}{1-\gamma}.
	\end{align*}
	Taking expectations and by Lemma \ref{lem:w-err}, we have
	\begin{equation}\label{eq:cLQ-w-error}
		\begin{aligned}
			&\bE_{\bD,\bS}\left[\cL_{w^*}(\widehat{Q})-\cL_{\widehat{w}}(\widehat{Q})+\cL_{\widehat w}( Q^*)-\cL_{w^*}( Q^*)\right]\\
			&\leq \frac{4R_{\max}}{1-\gamma}\norm{\widehat{w}-w^e}_{L_2 (P^{\pi^b})}+\frac{4\zeta\Sigma R_{\max}}{1-\gamma}\\
			&\leq \cO\Big(\frac{R_{\max}\Sigma^2 m^{-\frac{\alpha}{d+2\alpha}}(\log m)^\frac{3}{2}}{1-\gamma} \Big)+\frac{4\zeta\Sigma R_{\max}}{1-\gamma}.
		\end{aligned}
	\end{equation}
	We now analyze the approximation error term $\inf\limits_{Q \in \cQ}\cL_{\widehat{w}}(Q) - \cL_{\widehat{w}}(Q^*)$ arising from the decomposition in Eq. \eqref{eq:cL-decompose}.
	To control this error, we bound the error term as follows:
	\begin{align*}
		&\inf_{Q \in \cQ}\cL_{\widehat{w}}(Q)-\cL_{\widehat{w}}(Q^*)=\inf_{Q \in \cQ}\left[\cL_{\widehat{w}}(Q)-\cL_{\widehat{w}}(Q^*)\right]\\
		&= \inf_{Q \in \cQ}\left| \mathbb{E}_{(X,A) \sim P^{\pi^b} }[\widehat{w}(X,A)(Q(X,A)-\cT^*Q(X,A))]\right|\\ 
		&\leq  \Sigma\inf_{Q \in \cQ}\Big| \mathbb{E}_{(X,A) \sim P^{\pi^b} }[Q(X,A)-Q^*(X,A)\\
		&~~+T^*Q^*(X,A)-T^*Q(X,A)] \Big| \\
		&\leq (1+\gamma)\Sigma \Big| \mathbb{E}_{(X,A) \sim P^{\pi^b} }[Q(X,A)-Q^*(X,A)] \Big| \\
		&\leq (1+\gamma)\Sigma \inf_{Q \in \cQ} \norm{Q-Q^*}_\infty.
	\end{align*} 
	
	By Lemma \ref{lem:ApproximationError}, with the size $\mathcal{S} = \mathcal{O}\!\left(n^{\frac{d}{d + 2\beta}} \log n\right)$, the depth $\mathcal{D} = \mathcal{O}(\log n)$, and the weights bound $\mathcal{B} = \mathcal{O}\!\left(n^{\frac{d}{d + 2\beta}}\right)$, we have
	\begin{equation}\label{eq:cL approximation error}
		\begin{aligned}
			\inf_{Q \in \cQ}\cL_{\widehat{w}}(Q)-\cL_{\widehat{w}}(Q^*)&\leq (1+\gamma)\Sigma \inf_{Q \in \cQ} \norm{Q-Q^*}_\infty \\
			&\leq \cO\Big( (1+\gamma)\Sigma n^{-\frac{\beta}{d+2\beta}} \Big). 
		\end{aligned}
	\end{equation}
	Combining this with the density ratio error Eq. \eqref{eq:cLQ-w-error}, statistical error Eq. \eqref{eq:cL statistical error}, and approximation error Eq. \eqref{eq:cL approximation error}, we obtain
	\begin{align*}
		&\bE_{\bD,\bS}\left[\cL_{w^*}(\widehat Q) - \cL_{w^*}(Q^*)\right]\leq \cO\Big( \frac{R_{\max}\Sigma^2 m^{-\frac{\alpha}{d+2\alpha}}(\log m)^\frac{3}{2}}{1-\gamma}  \Big)+\frac{4\zeta\Sigma R_{\max}}{1-\gamma}\\
		&~~+\cO\Big(\frac{R_{\max}\Sigma n^{-\frac{\beta}{d+2\beta}}(\log n)^3}{1-\gamma}\Big)+\cO\Big( (1+\gamma)\Sigma n^{-\frac{\beta}{d+2\beta}} \Big)\\
		&=\cO\Big( \frac{R_{\max}\Sigma^2 m^{-\frac{\alpha}{d+2\alpha}}(\log m)^\frac{3}{2}}{1-\gamma}  \Big)\\
		&~~+\cO\Big(\frac{R_{\max}\Sigma n^{-\frac{\beta}{d+2\beta}}(\log n)^3}{1-\gamma}\Big)+\frac{4\zeta\Sigma R_{\max}}{1-\gamma}.
	\end{align*}
	Furthermore, by assuming $m\geq \Omega\left(\Sigma^{\frac{(d+2\alpha)}{2\alpha}}n^{\frac{\beta(d+2\alpha)}{\alpha(d+\beta)}}\right)$ and $\zeta=\cO\Big(m^{-\frac{\alpha}{d+2\alpha}}\Big)$, the dominant term simplifies Eq. \eqref{eq:cL-error}.
\end{proof}

\section{Auxiliary Results}\label{app:auxiliary}
In this section, we list the lemmas and tools used in the appendix.

\begin{lemma}[Lemma 6.1 of \cite{jiao2025deep}]\label{lem:ApproximationError}
	Assume that $ f \in \mathcal{H}^\varsigma $ with $ \varsigma = s + r $, $ s \in \mathbb{N}_0 $ and $ r \in (0,1] $. For any $ \varepsilon \in (0,1) $, there exists a ReLU DNN function $ \psi $ with depth $ \cD \leq \mathcal{O}\left( \log(1/\varepsilon) \right) $, size $ \mathcal{S} \leq \mathcal{O}\left( \varepsilon^{-d/\varsigma} \log(1/\varepsilon) \right) $, and weight bound $ \mathcal{B} \leq \mathcal{O}\left( \varepsilon^{-d/\varsigma} \right) $ such that
	$$
	\| f - \psi \|_{\infty} \leq \varepsilon.
	$$
\end{lemma}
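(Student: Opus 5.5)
We would prove the lemma by the now-standard local Taylor construction of Yarotsky. The plan is to approximate $f$ piecewise by Taylor polynomials glued with a ReLU-exact partition of unity, and then to replace every multiplication by a deep ReLU multiplication gadget. Throughout, constants may depend on $d$, $\varsigma$ and $M$ but not on $\varepsilon$.

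\textbf{Partition of unity and local Taylor polynomials.} Fix $N\in\mathbb{N}$ with $N\asymp \varepsilon^{-1/\varsigma}$ and the grid $\{m/N: m\in\{0,\dots,N\}^d\}$. Let $\phi_m(x)=\prod_{k=1}^d h(N x_k-m_k)$, where $h$ is the piecewise-linear hat (or trapezoid) function. The univariate factor $h$ is an exact ReLU network of constant size with weights of order $N$. The family $\{\phi_m\}$ is a partition of unity on $[0,1]^d$, each $\phi_m$ is supported in a cube of side $2/N$, and each point lies in the support of at most $2^d$ of them. Let $P_m$ be the degree-$s$ Taylor polynomial of $f$ at $m/N$. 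The Hölder condition with $\|x-y\|_\infty^r$ gives $|f(x)-P_m(x)|\le C M N^{-\varsigma}$ on $\operatorname{supp}\phi_m$. Hence
\begin{equation*}
\Big\|f-\sum_m \phi_m P_m\Big\|_\infty\le 2^d C M N^{-\varsigma}\le \varepsilon/2 .
\end{equation*}
We write $P_m(x)=\sum_{\|\widetilde\alpha\|_1\le s} a_{m,\widetilde\alpha}\,(x-m/N)^{\widetilde\alpha}$ with $|a_{m,\widetilde\alpha}|\le M$.

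\textbf{Replacing products by ReLU gadgets.} Each term $\phi_m(x)(x-m/N)^{\widetilde\alpha}$ is a product of at most $d+s$ bounded factors. We use Yarotsky's network $\widetilde{\times}$, built from the sawtooth approximation of $t\mapsto t^2$ and polarization. On bounded inputs it achieves accuracy $\delta$ with depth and size $O(\log(1/\delta))$ and bounded weights, and it outputs $0$ exactly whenever one of its inputs is $0$. We chain $d+s-1$ such gadgets, taking the factors of $\phi_m$ first. The $\phi_m$ part is then reproduced up to error $\delta$, and the whole product vanishes identically off $\operatorname{supp}\phi_m$. At every $x$, therefore, only $2^d\binom{d+s}{d}$ terms carry nonzero error. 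Summing the terms with coefficients $a_{m,\widetilde\alpha}$ gives a network $\psi$ with
\begin{equation*}
\|f-\psi\|_\infty\le \varepsilon/2 + C' M\delta .
\end{equation*}
Choosing $\delta\asymp\varepsilon$ yields $\|f-\psi\|_\infty\le\varepsilon$.

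\textbf{Counting the architecture.} All subnetworks run in parallel, padded by identity (ReLU pass-through) layers to a common depth, so the total depth is $O(\log(1/\varepsilon))$. There are $(N+1)^d\binom{d+s}{d}=O(\varepsilon^{-d/\varsigma})$ terms, each costing $O(\log(1/\varepsilon))$ nonzero parameters, so $\mathcal S=O(\varepsilon^{-d/\varsigma}\log(1/\varepsilon))$. The weights are bounded by $\max\{M,N,O(1)\}$ together with the grid offsets. This is $O(\varepsilon^{-1/\varsigma})$, which is within the claimed $\mathcal B=O(\varepsilon^{-d/\varsigma})$.

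\textbf{Main obstacle.} We expect the delicate step to be the error accumulation across the $O(\varepsilon^{-d/\varsigma})$ gadgets. A naive bound would multiply $\delta$ by the number of terms and force $\delta\asymp\varepsilon^{1+d/\varsigma}$. That would still give depth $O(\log(1/\varepsilon))$ but would need more care. The exact-zero property of $\widetilde{\times}$ avoids this by localizing the error to $O(1)$ terms at each point, so verifying that property, together with the boundedness of the intermediate inputs, is what we would check first.
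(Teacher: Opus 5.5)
Your construction is correct. The paper contains no proof of this lemma. It is imported from Lemma~6.1 of \cite{jiao2025deep} and used as a black box in the approximation-error steps of Lemma~\ref{lem:w-err} and Theorem~\ref{thm:LQ-err}. So you are not taking a different route from the paper so much as supplying a self-contained version of the standard argument behind such results. That argument is Yarotsky's scheme: local Taylor polynomials glued by a piecewise-linear partition of unity, with every product realized by a sawtooth-based multiplication network. The only adaptation needed for fractional smoothness is the remainder bound $|f(x)-P_m(x)|\le \frac{d^s}{s!}M\|x-m/N\|_\infty^{\varsigma}$, which you use. The citation buys brevity. Your route shows where each of the three complexity bounds comes from, and it gives a stronger weight bound, $\mathcal{B}=\mathcal{O}(\varepsilon^{-1/\varsigma})$ instead of $\mathcal{O}(\varepsilon^{-d/\varsigma})$. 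This is because the only growing parameters are the grid scalings $N$ and offsets $m_k\le N$ in the hat layer, while the multiplication gadgets keep $\mathcal{O}(1)$ weights at any depth.

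Two minor comments. First, the step you flag as the main obstacle is not actually delicate. Even the crude bound that sums gadget errors over all $\mathcal{O}(\varepsilon^{-d/\varsigma})$ terms only forces $\delta\asymp\varepsilon^{1+d/\varsigma}$. Then $\log(1/\delta)=(1+d/\varsigma)\log(1/\varepsilon)+\mathcal{O}(1)$, so depth and per-term size stay $\mathcal{O}(\log(1/\varepsilon))$. The exact-zero property of $\widetilde{\times}$ only improves constants; it does hold, since the squaring approximant vanishes at $0$ and the polarization identity then cancels exactly. Second, as with every bound of this form, $\mathcal{O}(\log(1/\varepsilon))$ must be read as $\mathcal{O}(\log(1/\varepsilon)+1)$, or $\varepsilon$ must be kept away from $1$. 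Otherwise the size bound tends to $0$ as $\varepsilon\to 1$. Your construction proves exactly this intended version.
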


\begin{lemma}[Lemma 20 of \cite{feng2024deep}]\label{lem:NBound}
	Let $\mathcal{F}$ be the ReLU DNN with width $\mathcal{W}$, depth $\mathcal{D}$, and size $\mathcal{S}$. Assume that the parameters of $\mathcal{F}$ are bounded by a constant $\mathcal{B}>0$, then for each $\delta > 0$,
	$$
	\log\mathcal{N}(\mathcal{F},~\delta,~\|\cdot\|_{\infty})\leq\mathcal{O}(\mathcal{S}\mathcal{D}\log(\mathcal{B}\mathcal{W}\mathcal{D}/\delta)).
	$$
\end{lemma}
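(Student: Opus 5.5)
The plan is the standard parameter-discretization argument. Two ingredients are needed: (a) a Lipschitz bound showing that the network output, uniformly over inputs $x\in[0,1]^d$, changes by at most a controlled multiple of the sup-norm perturbation of the parameters; and (b) a counting argument over sparsity patterns and a grid of parameter values. Throughout I assume $\mathcal{W}\geq d$ (the input dimension counts toward the width) and $\mathcal{B}\geq 1$; otherwise replace $\mathcal{W}$ by $\max\{\mathcal{W},d\}$ and $\mathcal{B}$ by $\max\{\mathcal{B},1\}$, which only changes constants.

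\textbf{Step 1 (forward bound and perturbation).} Write the hidden states as $h_0=x$ and $h_{i+1}=\sigma(W_i h_i+c_i)$. Since each row has at most $\mathcal{W}$ entries bounded by $\mathcal{B}$, induction on layers gives
\begin{equation*}
\|h_i\|_\infty\leq (\mathcal{W}\mathcal{B}+\mathcal{B})^i\leq (2\mathcal{W}\mathcal{B})^i .
\end{equation*}
Now take two parameter vectors $\varphi,\varphi'$ with the same architecture and $\|\varphi-\varphi'\|_\infty\leq\epsilon$. Let $\Delta_i:=\|h_i-h_i'\|_\infty$. Because ReLU is $1$-Lipschitz,
\begin{equation*}
\Delta_{i+1}\leq \mathcal{W}\mathcal{B}\,\Delta_i+\epsilon\,(\mathcal{W}\|h_i\|_\infty+1).
\end{equation*}
Unrolling this recursion over $\mathcal{D}+1$ affine maps yields
\begin{equation*}
\sup_{x\in[0,1]^d}|g_\varphi(x)-g_{\varphi'}(x)|\leq L\epsilon,\qquad L:=(\mathcal{D}+1)(2\mathcal{W}\mathcal{B})^{\mathcal{D}+1}.
\end{equation*}
Thus $\log L=\mathcal{O}(\mathcal{D}\log(\mathcal{W}\mathcal{B}\mathcal{D}))$, and this is exactly the source of the extra factor $\mathcal{D}$ in the final bound.

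\textbf{Step 2 (counting).} The total number of parameter slots is $T\leq(\mathcal{D}+1)(\mathcal{W}^2+\mathcal{W})$. The number of ways to choose at most $\mathcal{S}$ nonzero positions is at most $(T+1)^{\mathcal{S}}$. On each chosen support, discretize every nonzero entry of $[-\mathcal{B},\mathcal{B}]$ on a grid of mesh $\epsilon=\delta/L$, which gives at most $(2\mathcal{B}L/\delta+1)^{\mathcal{S}}$ grid points. Every $g_\varphi\in\mathcal{F}$ shares its support with some grid network whose parameters are within $\epsilon$ of $\varphi$. Hence, by Step 1, these grid networks form a $\delta$-cover in $\|\cdot\|_\infty$. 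Taking logarithms,
\begin{equation*}
\log\mathcal{N}\leq \mathcal{S}\log(T+1)+\mathcal{S}\log(2\mathcal{B}L/\delta+1)=\mathcal{O}\big(\mathcal{S}\,\mathcal{D}\log(\mathcal{B}\mathcal{W}\mathcal{D}/\delta)\big),
\end{equation*}
since both $\log T$ and $\log L$ are $\mathcal{O}(\mathcal{D}\log(\mathcal{W}\mathcal{B}\mathcal{D}))$.

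\textbf{Main obstacle.} The only delicate point is the perturbation recursion in Step 1. The difference at layer $i+1$ mixes two effects: the propagated error $W_i(h_i-h_i')$, and the parameter error $(W_i-W_i')h_i'$ (plus the bias error). The second term requires the a priori forward bound on $\|h_i'\|_\infty$, so the forward bound must be established first. One must also ensure that the resulting constant depends on $\mathcal{B}$ and $\mathcal{W}$ only through $(\mathcal{W}\mathcal{B})^{\mathcal{O}(\mathcal{D})}$. That is what makes its logarithm linear in $\mathcal{D}$ rather than larger. Everything else is bookkeeping.
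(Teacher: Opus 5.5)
Your proof is correct. The paper gives no argument of its own for this lemma; it simply imports it from \cite{feng2024deep}. What you have written is the standard parameter-perturbation plus grid-counting proof behind such entropy bounds (cf.\ the analogous bound in \cite{schmidt2020nonparametric}, proved the same way with weights bounded by $1$). It has three ingredients: a Lipschitz-in-parameters estimate with constant $L=(\mathcal{D}+1)(2\mathcal{W}\mathcal{B})^{\mathcal{D}+1}$, at most $(T+1)^{\mathcal{S}}$ sparsity patterns, and a mesh-$\delta/L$ grid on $[-\mathcal{B},\mathcal{B}]$. Your proof buys an explicit statement of the conventions the cited lemma leaves implicit: $x\in[0,1]^d$, $\mathcal{W}\ge d$, and $\mathcal{B}\ge 1$.

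Two small bookkeeping remarks.

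\emph{The split in Step 1.} Your recursion uses $\|h_i\|_\infty$, but your last paragraph splits off $(W_i-W_i')h_i'$, which calls for $\|h_i'\|_\infty$. Either split is fine, because your grid lies inside $[-\mathcal{B},\mathcal{B}]$, so both $h_i$ and $h_i'$ satisfy the forward bound.

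\emph{The final absorption step.} Bounding $\log(T+1)$ and $\log(2\mathcal{B}L/\delta+1)$ by $\mathcal{O}(\mathcal{D}\log(\mathcal{B}\mathcal{W}\mathcal{D}/\delta))$ requires that logarithm to be bounded below by a positive constant. For example, $\delta\le 1$ and $\mathcal{B}\mathcal{W}\mathcal{D}\ge 2$ suffice. With $\mathcal{W}=\mathcal{B}=\mathcal{D}=\delta=1$ the right-hand side vanishes while $\log L>0$. This restriction is already implicit in the statement: as written it cannot hold for $\delta>\mathcal{B}\mathcal{W}\mathcal{D}$, where its right-hand side is negative. It is harmless here, since the paper only applies the lemma with $\delta=1/m$ and $\delta=1/n$.
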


\begin{lemma}[Theorem D.2 of \cite{mohri2018foundations}]\label{lemmab3}
	Let \( X_1, \ldots, X_m \) be independent random variables taking values in \( \{-1,1\} \) with \(P(X_i = 1) = p \in [0,1] \) for \( i = 1, \ldots, m \). Then, \( \sum_{i=1}^m X_i \) follows the binomial distribution \( B(m, p) \). We will denote by \( \overline{X} \) the average \( \overline{X} = \frac{1}{m} \sum_{i=1}^m X_i \). Then, the following equality and inequalities hold:
	\begin{align}
		&P( \overline{X} - p > \epsilon ) \leq e^{-2m\epsilon^2} \tag{Hoeffding's inequality} \\ 
		&P( \overline{X} - p > \epsilon ) \leq e^{-\frac{m\epsilon^2}{2\sigma^2 + \frac{2\epsilon}{3}}} \tag{Bernstein's inequality}
	\end{align}
	where \( \sigma^2 = p(1 - p) = \text{Var}~(X_i) \).    
\end{lemma}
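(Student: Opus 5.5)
Throughout I read the $X_i$ as $\{0,1\}$-valued Bernoulli$(p)$ variables, which is the setting of \cite[Theorem D.2]{mohri2018foundations}. This is the only reading under which $\mathbb{E}[X_i]=p$, $\mathrm{Var}(X_i)=p(1-p)$ and $\sum_i X_i\sim B(m,p)$ are all consistent. Both inequalities will follow from the Chernoff exponential-moment method applied to $S:=\sum_{i=1}^m (X_i-p)$. For any $t>0$, Markov's inequality and independence give
\begin{equation*}
P(\overline{X}-p>\epsilon)\leq e^{-tm\epsilon}\,\mathbb{E}\big[e^{tS}\big]=e^{-tm\epsilon}\Big(\mathbb{E}\big[e^{t(X_1-p)}\big]\Big)^m .
\end{equation*}
So everything reduces to bounding the moment generating function of the single centered variable $Y:=X_1-p$, followed by optimizing over $t$.

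\textbf{Hoeffding.} $Y$ has mean zero and takes values in an interval of length $1$, namely $[-p,1-p]$. I will invoke Hoeffding's lemma, $\mathbb{E}[e^{tY}]\leq e^{t^2/8}$. To prove it, write $\phi(t)=\log\mathbb{E}[e^{tY}]=-tp+\log(1-p+pe^t)$. Then $\phi(0)=\phi'(0)=0$, and $\phi''(t)=q(1-q)\leq 1/4$, where $q=pe^t/(1-p+pe^t)\in[0,1]$. Taylor's theorem gives $\phi(t)\leq t^2/8$. Hence $P(\overline{X}-p>\epsilon)\leq \exp(-tm\epsilon+mt^2/8)$, and choosing $t=4\epsilon$ yields $e^{-2m\epsilon^2}$.

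\textbf{Bernstein.} Here I use the variance and the bound $|Y|\leq 1$. Expanding, $\mathbb{E}[e^{tY}]=1+\sum_{k\geq2}t^k\mathbb{E}[Y^k]/k!$, and $|\mathbb{E}[Y^k]|\leq \mathbb{E}[Y^2]=\sigma^2$ for $k\geq 2$. Combined with $1+x\leq e^x$, this gives $\mathbb{E}[e^{tY}]\leq\exp\big(\sigma^2(e^t-1-t)\big)$. Next, bounding $k!\geq 2\cdot 3^{k-2}$ in the series gives $e^t-1-t\leq \frac{t^2}{2(1-t/3)}$ for $0<t<3$. Therefore
\begin{equation*}
P(\overline{X}-p>\epsilon)\leq \exp\Big(-m\Big[t\epsilon-\frac{\sigma^2t^2}{2(1-t/3)}\Big]\Big).
\end{equation*}
Choosing $t=\epsilon/(\sigma^2+\epsilon/3)$, which lies in $(0,3)$, one checks that $1-t/3=\sigma^2/(\sigma^2+\epsilon/3)$. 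The bracket then simplifies to $\frac{\epsilon^2}{2(\sigma^2+\epsilon/3)}$, which is exactly the exponent $\frac{m\epsilon^2}{2\sigma^2+2\epsilon/3}$.

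\textbf{Main obstacle.} The only step requiring care is the Bernstein moment bound: the Taylor-series control $|\mathbb{E}[Y^k]|\leq\sigma^2$ and the factorial estimate that produces the $1-t/3$ denominator. I would verify this calculus first and then substitute the chosen $t$. The Hoeffding part is routine once the second-derivative bound $\phi''\leq 1/4$ is in hand.
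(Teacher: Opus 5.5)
The paper gives no proof of this lemma; it quotes \cite{mohri2018foundations} and uses the result as a black box. Your Chernoff-method argument is therefore a genuinely different route: a self-contained proof where the paper has only a citation. It is correct. The identity $\phi''(t)=q(1-q)\le 1/4$ with $t=4\epsilon$, the moment bound $|\mathbb{E}[Y^k]|\le\mathbb{E}[Y^2]$ from $|Y|\le 1$, the estimate $k!\ge 2\cdot 3^{k-2}$, and the choice $t=\epsilon/(\sigma^2+\epsilon/3)$ all check out. Reading the $X_i$ as $\{0,1\}$-valued is the right repair of the transcription error: with $\pm1$ values the sum is not $B(m,p)$ and $\mathrm{Var}(X_i)=4p(1-p)$. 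In fact the two displayed tail bounds survive even the literal reading. If $\hat p$ denotes the average of the Bernoulli variables $(X_i+1)/2$, then $\overline{X}=2\hat p-1\le\hat p$ pointwise, so the $\pm1$ event is contained in the Bernoulli one.

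Two small repairs are needed. First, assume $\epsilon>0$. Second, treat $p\in\{0,1\}$ separately. There $\sigma^2=0$ and your $t$ equals $3$, which lies outside $(0,3)$, where $e^t-1-t\le t^2/(2(1-t/3))$ is valid. In that case $Y=0$ almost surely and the probability is $0$.

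As for what each route buys: the citation is short, but as stated it covers only averages of Bernoulli variables. The paper actually invokes it for general bounded summands. These are the Rademacher sums $\tau_i\ell_{\widehat{w}}(Q_\delta;Z_i)$ in the proof of Lemma~\ref{lem:LQ-sta}, and the variables $\tilde{\ell}(\tilde{w};U_i)$ with range parameter $M$ in the proof of Lemma~\ref{lem:w-err}. Your MGF argument extends to that setting with cosmetic changes. If $|Y|\le M$, then $|\mathbb{E}[Y^k]|\le M^{k-2}\sigma^2$, which yields the exponent $m\epsilon^2/(2\sigma^2+2M\epsilon/3)$. Your explicit two-point formula for $\phi$ is Bernoulli-specific, but the bound $\phi(t)\le t^2(b-a)^2/8$ holds for any centered $Y\in[a,b]$. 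The reason is that $\phi''(t)$ is the variance of $Y$ under the exponentially tilted law, hence at most $(b-a)^2/4$. So your proof, lightly generalized, supplies what the paper's applications need, which the quoted Bernoulli statement does not literally provide.
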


\bibliographystyle{plain} 
\bibliography{ref_bib} 

\end{document}